%% file: main_arxiv.tex
\documentclass{article}

\input{math_commands}

\usepackage[preprint]{neurips_2026}

\usepackage[utf8]{inputenc} 
\usepackage[T1]{fontenc}    
\usepackage{hyperref}       
\usepackage{url}            
\usepackage{booktabs}       
\usepackage{amsfonts}       
\usepackage{nicefrac}       
\usepackage{microtype}      
\usepackage{xcolor}         
\usepackage{mathtools}
\usepackage{microtype}
\usepackage{graphicx}
\usepackage{subcaption}
\usepackage{booktabs} 
\usepackage{multirow}
\usepackage[table,xcdraw]{xcolor}
\usepackage{tabularx}
\usepackage{amsmath}
\usepackage{algorithm}
\usepackage{wrapfig}
\usepackage{titletoc}
\usepackage{enumitem}

\newtheorem{theorem}{Theorem}[section]

\newcommand{\ignore}[1]{}
\usepackage[noend]{algpseudocode}
\algdef{SE}[SUBALG]{Indent}{EndIndent}{}{\algorithmicend\ }%
\algtext*{Indent}
\algtext*{EndIndent}

\newcommand{\rnote}[1]{\text{\footnotesize\color{gray}\parbox[t]{.40\textwidth}{#1}}}

\title{Stealthy World Model Manipulation via Data Poisoning}

\author{
Yibin Hu\\
Department of Computer Science\\
Tulane University
\And
Xiaolin Sun\\
Department of Computer Science\\
Tulane University
\And
Zizhan Zheng\\
Department of Computer Science\\
Tulane University\\
New Orleans, LA, USA\\
\texttt{zzheng3@tulane.edu}
}

\begin{document}

\maketitle

\begin{abstract}
Model-based learning agents use learned world models to predict future states, plan actions, and adapt to new environments. However, the process of updating world models from collected experience creates a training-time attack surface: adversarially poisoned fine-tuning trajectories can manipulate the learned dynamics and thereby corrupt downstream planning. In this paper, we propose SWAAP, the first two-stage data poisoning framework for learned world models. In the first stage, SWAAP identifies a harmful target world model that induces low-return behavior under planning while remaining close to clean dynamics, using first-order bilevel optimization enabled by a transition-gradient theorem. In the second stage, SWAAP realizes this target through stealth-constrained gradient matching, modifying only a limited fraction of fine-tuning transition targets so that the induced training gradients steer the victim model toward the adversarial target, while a prediction-error regularizer encourages the poisoned targets to remain close to the world model's natural approximation error. To assess attack stealthiness, we evaluate defenses and detectability across three stages of the poisoning pipeline: pre-training detection of poisoned transitions, robust training during fine-tuning, and test-time monitoring of the resulting world model. Across diverse continuous-control tasks, SWAAP causes substantial performance degradation while keeping poisoned transitions close to clean data and evading the evaluated non-adaptive residual/CUSUM/TRIM-style defenses. These results reveal a practical vulnerability in world-model adaptation pipelines and highlight the need for robustness methods that protect both world-model training data and learned dynamics.
\end{abstract}

\section{Introduction}

While artificial intelligence (AI) has achieved remarkable success across various domains, building general-purpose agents that can quickly adapt to new tasks remains a major challenge,
 particularly for sequential decision-making tasks in open-ended environments that require substantial planning and adaptation. A promising direction is the  development of 
\emph{world models}~\citep{ha2018world} that accurately capture environmental structure and dynamics to support a wide range of downstream tasks. In this context, predictive world models, which allow agents to ``imagine'' future scenarios for safer, more efficient decisions, are proliferating~\citep{hafner2024masteringdiversedomainsworld, hansen2024tdmpc2, zhou2025dinowmworldmodelspretrained}. 
Further, with recent advances in diffusion and transformer architectures, foundation world models~\citep{Sora2,Genie,COSMOS}
capable of simulating interactive environments from multi-modal input are emerging and are increasingly 
being applied in complex domains such as autonomous vehicles and robotics~\citep{WFM}, making them valuable targets for malicious attacks. 

To support effective decision-making across diverse domains, world models must encode broad knowledge, process high-dimensional inputs (e.g., images, videos, and text), and make long-horizon predictions, introducing new vulnerabilities not present in traditional supervised learning or model-free reinforcement learning (RL) systems. Despite extensive research in AI security and adversarial machine learning over the past decade, ensuring the robustness of world models against adversarial manipulation remains largely unexplored, limiting their deployment in high-stakes domains.

In this work, we take an early step toward adversarially robust world modeling by introducing poisoning attacks tailored to world models. Our method strategically alters trajectory data used for training or fine-tuning, with the objective of manipulating model-based decision-making while maintaining outputs close to those of a clean model to evade detection. We believe this line of work is both practical and influential, as it highlights a fundamental vulnerability in world models that underpins their reliability in downstream applications.

Traditional data poisoning methods from supervised learning~\citep{poisoning-SVM, back-gradient, geiping2021witchesbrewindustrialscale} cannot be directly applied to our setting. These approaches typically assume fully differentiable training pipelines and discrete labels (e.g., flipping a \emph{cat} to a \emph{dog}), allowing the adversary to optimize per-example perturbations via gradient-based techniques. In contrast, poisoning a world model requires identifying perturbations to the predicted next-state outcomes, which are structured, high-dimensional, and may be deterministic or stochastic. Such perturbations influence not only one-step predictions but also compound over long-horizon rollouts. Moreover, differentiating through the training process of the world model is computationally expensive and generally intractable, making standard supervised learning attacks unsuitable for this problem.

Existing data-poisoning attacks in reinforcement learning~\citep{rakhsha2020policy, zhang2020adaptive} primarily manipulate rewards or transitions observed during training to steer the learned policy toward an adversary-specified target policy or behavior. Although performance degradation can in principle be achieved by enforcing a low-return target policy, this perspective does not directly apply to world-model poisoning, where the adversary cannot directly enforce the deployed policy. Instead, a world-model poisoning must act indirectly: it changes fine-tuning transitions so that the learned dynamics induce harmful planning or model-based policy improvement. SWAAP operationalizes this indirect attack by identifying a poisoned world model under which harmful behavior becomes favorable for the victim's planner, and then realizing this model by poisoning only a bounded fraction of fine-tuning transitions. This creates a coupled inverse-planning and data-realization problem: even if a low-return target policy is known, there may be no nearby world model that induces it while remaining sufficiently close to clean dynamics to evade detection. Moreover, existing RL poisoning formulations are designed for simpler tabular or low-dimensional settings and do not directly scale to deep world models with high-dimensional continuous latent states, continuous actions, and long-horizon compounding prediction errors, as in TD-MPC2~\citep{hansen2024tdmpc2}, DINO-WM~\citep{zhou2025dinowmworldmodelspretrained}, and DreamerV3~\citep{hafner2024masteringdiversedomainsworld}. These differences motivate a scalable poisoning framework tailored to learned world models.

In this work, we propose \textbf{SWAAP} (\textbf{S}tealthy \textbf{W}orld Model M\textbf{A}nipulation via D\textbf{A}ta \textbf{P}oisoning), a two-stage data poisoning framework to manipulate learned world models in model-based RL. Our framework is applicable to both stochastic and deterministic transition dynamics. SWAAP first identifies a stealthy target world model that induces low-return behavior while staying close to the intact world model, via bilevel optimization. It then achieves this target through a realistic data-poisoning procedure that alters only a small fraction of the fine-tuning transition-dynamics data. 
Our contributions can be summarized as follows.

\begin{itemize}[leftmargin=*, itemsep=0.25ex, topsep=0.25ex, parsep=0pt, partopsep=0pt]
\item We propose, to our knowledge, the first data poisoning attack that explicitly targets learned world-model dynamics in deep model-based RL, which effectively degrades downstream planning performance while keeping the poisoned dynamics close to the true environment, making the attack difficult to detect. 
\item \textbf{Two-stage decomposition}. Direct bilevel optimization through fine-tuning is intractable; we decouple it into (i) finding a stealthy adversarial target world model enabled by a transition gradient, and (ii) realizing it via gradient-matched data poisoning applied to trajectory data. We also show empirically that both stages are necessary.
\item \textbf{Stealth-constrained data realization.}
We adapt gradient matching to world-model fine-tuning by constructing poisoned transition targets whose gradients steer the victim toward the Stage~1 target while a prediction-error regularizer keeps perturbations close to the clean model's natural deviation scale. We also introduce a trajectory-consistent variant for sequential fine-tuning data, where a poisoned intermediate state is used consistently as both the next-state target at time $t$ and the current state at time $t+1$.
\item \textbf{Empirical evaluation.} 
Evaluations using TD-MPC2 and DINO-WM show that poisoning a small fraction of fine-tuning data can cause large performance drops across diverse continuous-control benchmarks, including DMControl~\citep{tassa2018deepmind}, MyoSuite~\citep{caggiano2022myosuite}, and MetaWorld~\citep{yu2020meta}. We evaluate three complementary notions of stealthiness: data-level stealth before fine-tuning ($\delta_d$, deviation-based filtering, and CUSUM-style screening of data trajectories), robustness during fine-tuning (TRIM), and model-level deviation after deployment ($\delta_m$). Across these evaluations, SWAAP maintains low data-level deviation and remains effective under the evaluated defenses and detectors, highlighting the need for robustness methods that protect both world-model training data and learned dynamics.
\end{itemize}



\section{System and Threat Models}

In this section, we present the system and threat models, covering both world models and our proposed data-poisoning attacks framework. A detailed summary of related work on world models, model-based RL, model poisoning, data poisoning, and defenses is provided in Appendix~\ref{app:related_works}.

\subsection{World Models}\label{sec:world}

We study an agent interacting with an environment formalized as a Markov decision process (MDP) $(S, A, P, R, \gamma, \mu)$, where $\mu$ is the initial state distribution, $S$ is the state space, $A$ is the action space, $P: S \times A \to \Delta(S)$ is the transition kernel, $R: S \times A \times S \to \mathbb{R}$ is the reward function, and $\gamma \in (0,1)$ is the discount factor. The agent learns a world model $P_\psi$ (parameterized by $\psi$) that approximates the environment dynamics $P(s'\mid s,a)$ from transition data $D=\{(s_i,a_i,s_i')\}_{i=1}^{|D|}$. 
To simplify notation and keep the discussion general, we assume a deterministic transition function in the main text and write the next state as $s' = P_{\psi}(s,a)$. With this slight abuse of notation, $P_{\psi}$ is trained by minimizing the squared prediction error
\vspace{-1ex}
\[
\mathcal{L}(\psi;D)=\sum_{(s,a,s')\in D}\|s'-P_\psi(s,a)\|_2^2 .
\]
Extensions to the stochastic transition functions are in Appendix~\ref{app:stochastic}, where the squared-error objective is replaced by a likelihood-based formulation. 
In modern model-based agents, prediction and planning are usually performed in a learned latent space $z=\operatorname{enc}(s)$. To simplify notation, we write transitions in state space, but in our implementation $s$ denotes the encoded latent $z$ unless otherwise stated; transition predictions, poisoned targets, and stealth metrics are therefore computed in latent space. If fine-tuning data are stored as raw observations, the same objective can in principle be optimized through the encoder by replacing a raw next observation $\tilde{o}'$ whose encoding realizes the desired latent target. Appendix~\ref{app:graybox} gives initial raw-space transfer evidence across different encoders.

We adopt TD-MPC2~\citep{hansen2024tdmpc2}, a representative world modeling framework, to explain our approach. 
TD-MPC2 
jointly trains a latent dynamics model, value functions, and a policy, and uses the policy to initialize model predictive control (MPC).  
In general, a learned world model induces a downstream policy or planner $\pi_{\theta^\star(\psi)}$ by optimizing expected return under $P_\psi$:
\vspace{-1ex}
\[
\theta^\star(\psi)\in \argmax_\theta J(P_\psi,\theta),
\qquad
J(P_\psi,\theta)=
\mathbb{E}_{(s_t,a_t,s_{t+1})\sim P_\psi,\pi_\theta}
\left[\sum_{t=0}^{T}\gamma^t R(s_t,a_t,s_{t+1})\right].
\]
For MPC-based agents, at time $t$ the planner evaluates candidate action sequences $a_{t:t+H}$ by rolling them out under $P_\psi$ from $\hat{s}_t=s_t$ and executes only the first action of the best sequence:
\[
a_{t:t+H}^\star
\in
\argmax_{a_{t:t+H}}
\sum_{h=0}^{H}\gamma^h R(\hat{s}_{t+h},a_{t+h},\hat{s}_{t+1+h}),
\quad
\hat{s}_{t+h+1}\sim P_\psi(\cdot\mid \hat{s}_{t+h},a_{t+h}),
\quad
a_t^{\mathrm{exec}}=a_0^\star.
\] 
A detailed description of the MPC procedure is given in Appendix~\ref{mpc_details}.




\subsection{Threat Model}\label{sec:threat_model}



We consider a continual fine-tuning pipeline for learned world models: a developer pretrains a world model on a large general-purpose dataset and then periodically adapts it using locally collected trajectories from a deployed robot, edge device, or simulation client. In such pipelines, compromising the local fine-tuning buffer is a more accessible attack surface than compromising the pre-training process or directly overwriting deployed model parameters. We consider a data-poisoning adversary with access to this local fine-tuning stream, such as through a compromised data-logging path, data-aggregation service, or insider write access. The attacker may (i) modify at most $r_p |D|$ ($r_p\in (0,1]$) transitions in the fine-tuning dataset $D$ by replacing the recorded next state $s_i'$ with an adversarial target $\tilde{s}_i'$, leaving $s_i$, $a_i$, and the reward unchanged; and (ii) interact with a clean copy of the environment to collect reference trajectories for surrogate fitting and stealth regularization. The attack objective is to reduce the long-term return of the agent after the poisoned world model is used for planning or model-based policy improvement.

\begin{wrapfigure}{r}{0.49\textwidth}
    \centering
    \vspace{-2.5em}
    \includegraphics[width=\linewidth]{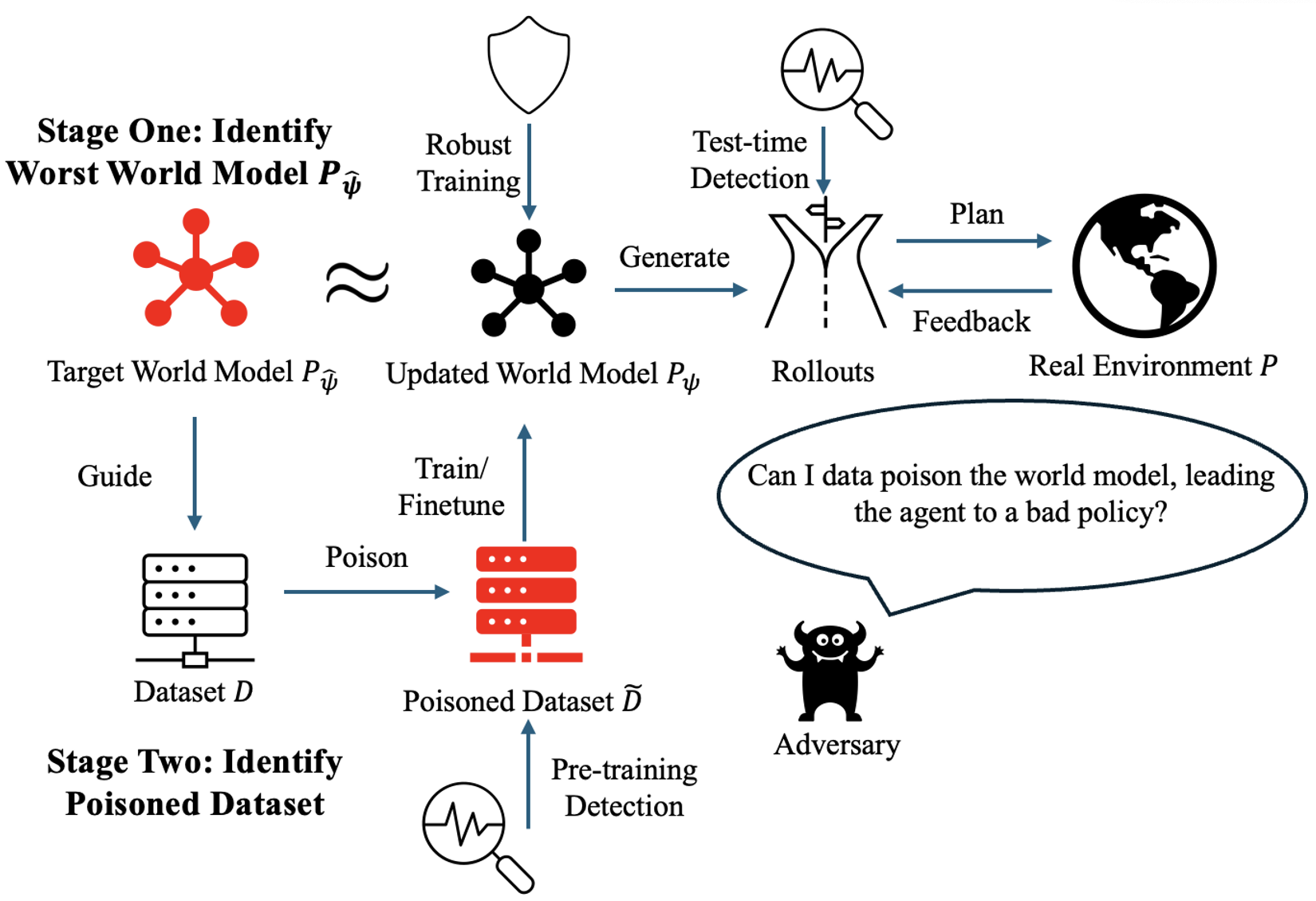}
    \caption{\small Pipeline of SWAAP.}
    \label{fig:pipeline}
    \vspace{-1em}
\end{wrapfigure}

In our main experiments, we assume white-box access to the victim world-model architecture, pretrained weights $\psi_0$, and fine-tuning loss. We use this as an attack feasibility study and a worst-case robustness evaluation, analogous to white-box adversarial evaluation in supervised learning; it is also relevant to open-weight world models and downstream adaptation settings. The attacker may additionally collect clean reference trajectories from the environment for surrogate fitting and stealth regularization. We also evaluate a gray-box variant in Appendix~\ref{app:graybox}, in which the attacker does not know the exact victim parameters or the downstream task. In both white-box and gray-box settings, the attacker only modifies a bounded subset of recorded fine-tuning transitions. The attacker does not directly change the deployed model parameters at test time; the attack must instead act through subsequent world-model adaptation. This separates SWAAP from direct model overwrite baselines and from test-time observation perturbations, which intervene at different stages of the pipeline.

\subsection{Bilevel Formulation of Problem}
The adversary can pursue multiple objectives when manipulating the world model update, such as steering the agent toward a specific target policy or degrading its performance. In this work, we focus on the objective of reducing the agent’s long-term return. We consider a data-poisoning adversary that may perturb a bounded fraction of transitions in the fine-tuning dataset $D$, replacing $(s_i,a_i,s_i')$ with $(s_i,a_i,\tilde{s}_i')$ to form a poisoned dataset $\tilde{D}$.  After the agent trains on $\tilde{D}$, the resulting world model $P_{\psi}$ deviates from the true dynamics $P$ and induces suboptimal behavior. This interaction can be formalized as the following bilevel optimization problem:

\vspace{-2ex}
\begin{equation}
\begin{aligned}
\min_{\tilde{D}}\quad
& J\big(P, \theta^*(\psi)\big)
+ \lambda L(P_\psi, P) \\
\vspace{-1ex}
\text{s.t.}\quad
 P_\psi = F(P_{\psi_0}, \tilde{D}),
\theta^*(\psi)
&= \argmax_{\theta} J(P_{\psi}, \theta), 
\sum_{i=1}^{|D|}
\mathbf{1}\!\left[\tilde{s}'_i \neq s'_i\right]
\le r_p\,|D|.
\end{aligned}
\label{bilevel_poison}
\end{equation}
\vspace{-2ex}

where $J\big(P,\theta^*(\psi)\big)$ is the return of the surrogate policy in the true environment, $F(P_{\psi_0},\tilde D)$ denotes fine-tuning the initial world model $P_{\psi_0}$ on the poisoned dataset $\tilde D$, and $L(P_\psi,P)$ penalizes deviation between the poisoned world model and the true dynamics. In the general stochastic setting, this deviation can be measured by an expected divergence between transition kernels, such as $D_{\mathrm{KL}}(P(\cdot\mid s,a)\,\|\,P_\psi(\cdot\mid s,a))$. Our experiments use deterministic latent world models, where the transition model outputs a point prediction and the KL divergence is degenerate; we therefore instantiate $L(P_\psi,P)$ with the squared latent prediction-error surrogate:
\begin{equation}
L(P_\psi,P)
=
\mathbb{E}_{(s,a,s')\sim P,\pi_{\theta^*(\psi)}}
\left[
\|P_\psi(s,a)-s'\|_2^2
\right].
\label{eq:deterministic_stealth_loss}
\end{equation}
Thus, we instantiate a deterministic formulation here; Appendix~\ref{app:stochastic} gives the stochastic formulation. In summary, the \emph{outer minimization} reduces real-environment performance, while the \emph{inner maximization} captures the agent's policy optimization under the poisoned world model.

\section{Stealthy World Model Manipulation via Data Poisoning}\label{sec:method}

Directly solving Eq.~\ref{bilevel_poison} is intractable because it requires optimizing which transition targets to perturb, how to perturb them, and how the victim world model changes after fine-tuning. SWAAP therefore decomposes the attack into two stages, shown in Figure~\ref{fig:pipeline}. Stage~1 identifies a harmful but low-deviation target world model $\hat{\psi}$ by solving a model-space bilevel problem. Stage~2 realizes this target through gradient-matched data poisoning, constructing $\tilde D$ so that fine-tuning gradients steer the victim model toward $\hat{\psi}$. Stealthiness is enforced at both levels: Stage~1 regularizes model-level deviation, while Stage~2 regularizes data-level deviations so poisoned targets remain close to the clean model's natural prediction error.


\subsection{Stage 1: Identification of Perturbed Models}
\label{sec:stage1}

In the first stage, the attacker searches for a target world model $\hat{\psi}$ that induces low return when used for planning, while remaining close to the true dynamics. Let $J(\psi,\theta)\coloneqq J(P_\psi,\theta)$ denote the expected return of policy $\pi_\theta$ in the imagined environment $P_\psi$, and let $J(P,\theta)$ denote its return in the true environment. The attacker then solves the model-space bilevel problem 
\begin{equation}
\hat{\psi}\in\argmin_{\psi}\; J(P,\theta^\star(\psi))+\lambda L(P_\psi,P)
\quad
\text{s.t.}\quad
\theta^\star(\psi)\in \argmax_{\theta} J(\psi,\theta),
\label{eq:stage1_bilevel}
\end{equation}
where $L(P_\psi,P)$ penalizes deviation from true dynamics and controls the attack--stealth trade-off. Directly solving \Eqref{eq:stage1_bilevel} is difficult because changing $\psi$ affects the agent only indirectly through long-horizon model-based planning or policy improvement. The outer objective is a trajectory-level return evaluated in the true environment, while the inner problem is a nonconvex RL/planning problem over high-dimensional policies and learned dynamics. Thus, applying a generic bilevel optimizer would require differentiating through the implicit response $\theta^\star(\psi)$, which is computationally prohibitive in modern world-model agents. We therefore use a first-order dynamic-barrier bilevel method in the spirit of BOME~\citep{liu2022bome}, which replaces the implicit optimality condition with a value-function constraint; details are given in Appendix~\ref{app_bome}. However, directly instantiating this update in model-based RL still requires the transition-model gradient $\nabla_\psi J(\psi,\theta)$, which is not provided by the standard policy-gradient theorem. We thus derive a general transition-gradient estimator for stochastic transition kernels $P_\psi(\cdot\mid s,a)$.
\begin{theorem}[Transition Gradient]
\label{thm:transition_gradient}
For a finite-horizon MDP with stochastic transition kernel $P_\psi(\cdot|s,a)$, policy $\pi_\theta$, and time-indexed value function $V_t(s)\coloneqq\mathbb{E}_{(s_t,a_t,s_{t+1})\sim P_\psi,\pi_\theta}\left[\sum_{i=0}^{T-t}\gamma^i R(s_{t+i},a_{t+i},s_{t+i+1})\,\middle|\,s_t=s\right].$
Assuming $P_\psi(s'|s,a)$ is differentiable in $\psi$, the gradient of the  return with respect to the transition parameters is
\[
\nabla_\psi J(\psi,\theta)
=
\mathbb{E}_{(s_t,a_t,s_{t+1})\sim P_\psi,\pi_\theta}
\Big[\sum_{t=0}^{T-1} \gamma^t
\big(R(s_t,a_t,s_{t+1}) + \gamma V_{t+1}(s_{t+1})\big)\nabla_{\psi}\log P_{\psi}(s_{t+1}\mid s_t,a_t)
\Big]
\]
\end{theorem}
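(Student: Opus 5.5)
The plan is a likelihood-ratio (score-function) argument on the trajectory distribution, followed by a causality/reward-to-go reduction and a tower-property step that introduces $V_{t+1}$.

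\textbf{Step 1: trajectory factorization.} Write $J(\psi,\theta)=\int p_\psi(\tau)\,G(\tau)\,d\tau$, where $\tau=(s_0,a_0,s_1,\dots)$ and $G(\tau)=\sum_{k}\gamma^k R(s_k,a_k,s_{k+1})$. The trajectory density factorizes as
\[
p_\psi(\tau)=\mu(s_0)\prod_k \pi_\theta(a_k\mid s_k)\,P_\psi(s_{k+1}\mid s_k,a_k).
\]
Only the transition factors depend on $\psi$; the initial distribution $\mu$, the policy $\pi_\theta$ (with $\theta$ held fixed) and the reward $R$ do not. Differentiating under the integral, justified by the assumed differentiability together with a standard dominated-convergence condition (bounded rewards, integrable score), and using $\nabla p=p\,\nabla\log p$, gives
\[
\nabla_\psi J=\mathbb{E}\Big[G(\tau)\sum_{t}\nabla_\psi\log P_\psi(s_{t+1}\mid s_t,a_t)\Big].
\]

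\textbf{Step 2: causality.} Exchange the sums and show that, for each transition index $t$, every reward term $\gamma^k R(s_k,a_k,s_{k+1})$ with $k<t$ contributes zero. Such a term is measurable with respect to $\mathcal{F}_t=\sigma(s_0,a_0,\dots,s_t,a_t)$. Conditioning on $\mathcal{F}_t$ and using
\[
\mathbb{E}\big[\nabla_\psi\log P_\psi(s_{t+1}\mid s_t,a_t)\,\big|\,\mathcal{F}_t\big]=\nabla_\psi\!\int P_\psi(s'\mid s_t,a_t)\,ds'=0
\]
kills these terms. Only rewards with $k\ge t$ survive, and the reward at $k=t$ is kept because it depends on $s_{t+1}$.

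\textbf{Step 3: tower property.} The surviving weight is
\[
\gamma^t\Big(R(s_t,a_t,s_{t+1})+\sum_{k>t}\gamma^{k-t}R(s_k,a_k,s_{k+1})\Big).
\]
Condition on $\mathcal{F}_t\vee\sigma(s_{t+1})$. The score at time $t$ is measurable with respect to this $\sigma$-field, and by the Markov property the conditional expectation of $\sum_{k>t}\gamma^{k-t-1}R(s_k,a_k,s_{k+1})$ equals $V_{t+1}(s_{t+1})$ by the definition in the statement. This replaces the future rewards by $\gamma V_{t+1}(s_{t+1})$ and yields the claimed formula.

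\textbf{Main obstacle: horizon bookkeeping.} The return sums rewards up to $t=T$, which involves $s_{T+1}$, while the gradient sums scores only to $T-1$ and $V_T$ covers a single reward. I would resolve this by fixing the convention that the final transition either does not depend on $\psi$ (terminal state) or that the episode's last $\psi$-dependent transition is $s_{T}$. Under either convention the index ranges in Steps 1--3 line up exactly: the $t=T$ score term is absent, and $V_{t+1}$ with $t\le T-1$ is well defined. If neither convention is intended, the same argument simply extends the sum to $t=T$ with $V_{T+1}\equiv 0$. I would also state the regularity assumptions used for interchanging the gradient and the expectation explicitly, since the theorem only asserts differentiability.
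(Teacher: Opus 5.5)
Your proof is correct, but it takes a genuinely different route from the paper's.

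The paper argues by Bellman recursion, as in the classical proof of the policy-gradient theorem. It differentiates $V_t(s_0)=\sum_{a_0}\pi(a_0\mid s_0)\sum_{s_1}P_\psi(s_1\mid s_0,a_0)\,[R+\gamma V_{t+1}(s_1)]$ by the product rule. This gives an immediate term $M_t(s_0)$, in which $\nabla_\psi P_\psi$ multiplies $R+\gamma V_{t+1}(s_1)$, plus the term $\gamma\,\mathbb{E}[\nabla_\psi V_{t+1}(s_1)]$. It then unrolls this $T-t$ times and discards the remainder using a terminal condition $V_T\equiv 0$.

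\textbf{What the paper's route buys.} The weight $R+\gamma V_{t+1}$ comes straight out of the Bellman equation, so no past-reward cross terms ever appear. Your Step 2 and the zero-mean score identity are therefore not needed. The same recursion also yields the paper's discounted infinite-horizon occupancy-measure form. It also yields the deterministic pathwise variant (chain rule through $s_{t+1}=f_\psi(s_t,a_t)$), where a score-function argument is not available.

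\textbf{What your route buys.} Your trajectory-level likelihood-ratio argument makes the regularity explicit: differentiation under the integral and $\int\nabla_\psi P_\psi(s'\mid s,a)\,ds'=0$. It works verbatim on continuous state spaces, where the paper writes sums. It also exposes the horizon bookkeeping.

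\textbf{The horizon issue.} The obstacle you flag is real. Under the statement's own definition, $V_T(s)=\mathbb{E}[R(s_T,a_T,s_{T+1})\mid s_T=s]$ is not zero. The paper's choice $V_T\equiv 0$ therefore amounts to truncating the return at $\sum_{t=0}^{T-1}\gamma^t R(s_t,a_t,s_{t+1})$. If instead the main text's return $\sum_{t=0}^{T}$ and the stated $V_t$ are taken literally, your version is the right formula: extend the sum to $t=T$ and set $V_{T+1}\equiv 0$.
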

Using Theorem~\ref{thm:transition_gradient}, Stage~1 alternates between approximating the agent's response under the current world model and updating $(\psi,\theta)$ with the dynamic-barrier direction. At iteration $k$, we first run $W$ policy-gradient steps to obtain an approximate best response $\theta_k^W$ under $P_{\psi_k}$. We then estimate the outer and constraint gradients using rollouts from the true environment and the current/perturbed world models, and update $(\psi_k,\theta_k)$ by the first-order dynamic-barrier step. Algorithm~\ref{model_identification} summarizes the procedure (see Appendix~\ref{app:model poisoning implementation} for the algorithm details); Appendix~\ref{app_bome} provides the full BOME derivation and Appendix~\ref{app:theorem1} proves Theorem~\ref{thm:transition_gradient}, including the discounted infinite-horizon stochastic form and the deterministic finite-horizon specialization used by deterministic latent predictors.



\subsection{Stage 2: Poisoning Data to Manipulate Model}

Stage~1 identifies a harmful target world model $P_{\hat{\psi}}$, but directly replacing the victim model would require model overwrite. Stage~2 instead realizes this target through bounded data poisoning. Let $F(P_{\psi_0},\tilde D)$ denote fine-tuning the initial world model on poisoned data $\tilde D$. The attack objective is
\begin{equation}
\begin{aligned}
&\min_{\tilde{D}}\ \ L\big(P_{\hat{\psi}}, F(P_{\psi_0},\tilde{D})\big)
&\text{s.t.} \sum_{i=1}^{|D|}
\mathbf{1}\!\left[\tilde{s}'_i \neq s'_i\right]
\le r_p\,|D|,
\label{prob:data-poison}
\end{aligned}
\end{equation}
where the attacker modify at most an $r_p$ fraction of next-state targets while keeping $(s,a,r)$ unchanged. Directly solving this problem requires both discrete subset selection and continuous target optimization while repeatedly fine-tuning the victim model, so we use a two-step approximation.

First, we select the top-$r_p$ fraction of transitions with largest residual $e_{\psi}(s,a,s')=\|s'-P_{\psi}(s,a)\|_2$ under the Stage~1 target model. These transitions are most inconsistent with the target dynamics and therefore provide strong local signal for steering fine-tuning toward $P_{\hat\psi}$. Let $D_p$ be this selected subset and $D_c=D\setminus D_p$. The poisoned dataset is $\tilde D=D_c\cup \tilde D_p$, where $\tilde D_p$ replaces $s'$ by $\tilde s'$ only for transitions in $D_p$. As throughout the paper, these deviations are computed in latent space.

Second, we optimize the poisoned targets by gradient matching. Let
\begin{equation*}
\begin{alignedat}{2}
G_{\mathrm{real}}
&= \mathbb{E}_{(s,a,\tilde{s}')\sim\tilde{D}}
\!\left[\nabla_{\psi_0}\|\tilde{s}'-P_{\psi_0}(s,a)\|_2^2\right],
\qquad
&
G_{\mathrm{target}}
&= \mathbb{E}_{(s,a)\sim D_{\mathrm{all}}}
\!\left[\nabla_{\psi_0}\|P_{\hat{\psi}}(s,a)-P_{\psi_0}(s,a)\|_2^2\right].
\end{alignedat}
\end{equation*}
where $D_{\mathrm{all}}$ is a large clean reference dataset collected by the attacker. $G_{\mathrm{real}}$ is the gradient induced by training on the poisoned data, while $G_{\mathrm{target}}$ is the update direction that would move the clean model toward the Stage~1 target. Thus, aligning these gradients makes ordinary fine-tuning move the victim world model toward $P_{\hat\psi}$.

To achieve the objective, 
we follow gradient-matching ~\citep{geiping2021witchesbrewindustrialscale} and optimize gradient direction using cosine alignment. The key requirement is that the victim's training update is a descent direction for the adversarial target objective; matching direction is thus more important than matching gradient magnitude, which can vary with batch size, model scale, and loss normalization. We optimize
\begin{equation}
\small
\begin{aligned}
L_P
&=(1-\alpha)\big(1-\cos(G_{\mathrm{real}},G_{\mathrm{target}})\big)
+\alpha\!\!\sum_{(s_i,a_i,s_i')\in \tilde D_p}
\left(
\|\tilde s_i'-s_i'\|_2-\|P_{\psi_0}(s_i,a_i)-s_i'\|_2
\right)^2.
\end{aligned}
\label{dp_objective}
\end{equation}
The cosine term aligns the poisoned-data gradient with the Stage~1 target gradient. The second term regularizes perturbation size to match the clean model's natural one-step prediction error, making poisoned targets less conspicuous under reference-model residual screening. The coefficient $\alpha$ controls the trade-off between target realization and data-level stealth.

\paragraph{Trajectory-consistent poisoning.}
The formulation above treats fine-tuning data as a set of transition tuples, a common practice in reality. When the fine-tuning data are stored as sequential trajectories, however, the same intermediate state appears twice: as the next-state target of transition $t$ and as the current-state input of transition $t+1$. In this case, perturbing only $s'_{t}$ would create an inconsistent trajectory, which can be detected by the victim agent with access to trajectory data. To further improve attack stealthiness, we therefore extend the above formulation by considering a trajectory-consistent variant (Appendix~\ref{app:trajectory_dp}): if transition $t$ is selected for poisoning, we replace both occurrences of the intermediate latent state, excluding terminal transitions where $t+1$ starts a new trajectory. The original matching gradient is augmented with a one-step consistency term to become

\vspace{-3ex}
\begin{equation*}
G_{\mathrm{real}}
=
\mathbb{E}_{(s_t,a_t,s'_t,a_{t+1},s'_{t+1})\sim \tilde D}
\Big[
\nabla_{\psi_0}
\Big(
\|\tilde{s}'_{t}-P_{\psi_0}(s_t,a_t)\|_2^2 +
\|s'_{t+1}-P_{\psi_0}(\mathrm{stopgrad}(\tilde{s}'_{t}),a_{t+1})\|_2^2
\Big)
\Big],
\label{eq:seq_greal}
\end{equation*}
so that the poisoned intermediate state remains predictive of the observed successor under the next action. This variant preserves the same poisoning-budget interpretation, but each selected poisoned state is applied consistently across adjacent trajectory tuples.

\subsection{Defenses Against Data Poisoning}\label{sec:defenses}
We evaluate stealthiness against three stages of defense: pre-training detection before poisoned transitions are used for fine-tuning, robust training during fine-tuning, and test-time monitoring.

\noindent\textbf{Pre-training detection.} Pre-training detection-based defenses aim to identify poisoned transitions before they are used for model updates. We consider a deviation-based detector that compares each candidate transition against a reference world model and flags transitions whose residual $e_{\psi_{\mathrm{ref}}}(s,a,s')$ exceeds a threshold~\citep{chen2021depois}. Specifically, we consider a defender with access to a reasonably accurate reference world model $P_{\psi_{\mathrm{ref}}}$ for which we choose to use $P_{\psi_0}$, and it computes data-level deviation $\delta_d=\frac{1}{|\tilde{D}|}\sum_{(s_i,a_i,\tilde{s}_i')\in\tilde{D}}\frac{\|P_{\psi_0}(s_i,a_i)-\tilde{s}'_i\|_2}{\|P_{\psi_0}(s_i,a_i)\|_2}$, which measures the deviation of crafted poisoned transitions from clean predictions of the reference model and is compared with clean data $\delta_{\mathrm{d, ref}}=\frac{1}{|D|}\sum_{(s_i,a_i,s_i')\in D}\frac{\|P_{\psi_0}(s_i,a_i)-s'_i\|_2}{\|P_{\psi_0}(s_i,a_i)\|_2}$. We also evaluate a CUSUM-style sequential detector calibrated on held-out clean trajectories. The detector accumulates a scalar latent anomaly signal along candidate fine-tuning trajectories; details are in Appendix~\ref{app:cusum}. Because SWAAP operates on latent transitions, we further include a latent-space PCA sanity check on $\tilde{D}$: poisoned next-state targets largely overlap with true next states and do not form obvious latent outliers.

\textbf{Training-time defenses} aim to reduce the impact of poisoned samples without explicitly identifying them. One representative approach is the TRIM strategy~\citep{poisoning-SVM}, which iteratively filters transitions based on their deviation. At each iteration, the transitions are ranked by $e_{\psi_{\mathrm{ref}}}(s,a,s')$, and only the lowest $(1-\beta)n$ deviation transitions are retained, where $n$ is the number of transitions considered and $\beta \in (0,1)$ controls the fraction of discarded data. The fine-tuning update is then computed using this subset. By discarding high-deviation transitions, the TRIM strategy limits the influence of adversarially perturbed transitions while preserving the underlying clean dynamics.  

\textbf{Test-time monitoring}. At test time, the defender may monitor whether the learned world model remains consistent with observed environment transitions. For deterministic world models (stochastic case in Appendix~\ref{app:stochastic}), we report the model-level deviation $\delta_m=\frac{1}{T}\sum_{i=1}^{T}\frac{\|s'_i-P_\psi(s_i,a_i)\|_2}{\|s'_i\|_2}$, computed over rollout transitions $\{(s_t,a_t,s'_t)\}_{t=0}^{T}$. This measures how far the poisoned model deviates from true dynamics observed during execution, and clean model deviation $\delta_{\mathrm{m, ref}}=\frac{1}{T}\sum_{i=1}^{T}\frac{\|s'_i-P_{\psi_0}(s_i,a_i)\|_2}{\|s'_i\|_2}$. However, in our data-poisoning threat model, test-time monitoring is a late-stage defense: if poisoned data have already evaded pre-training screening and been used for fine-tuning, harmful behavior may occur before the learned model is diagnosed. Therefore, the pre-training data-level deviation $\delta_d$ is the more direct stealth metric for this attack surface, while $\delta_m$ measures whether the resulting model remains plausibly close to clean dynamics after deployment.

\section{Experiments}\label{sec:exp}
\vspace{-1ex}
We evaluate SWAAP's effectiveness and stealthiness under the criteria in Section~\ref{sec:defenses}. After describing setup and metrics, we present main results on return degradation, $\delta_d$, $\delta_m$, and evaluated defenses, then summarize ablations and broader settings including larger buffers, trajectory-consistent poisoning, DINO-WM, and gray-box transfer. Additional implementation details, system configurations, hyperparameters, and extended results are deferred to Appendices~\ref{app:implementations}--\ref{app:generalization}.

\subsection{Experimental Setup and Metrics}
\vspace{-1ex}
We use TD-MPC2~\citep{hansen2024tdmpc2} as the main victim agent. Unless otherwise stated, the victim starts from a world model $\psi_0$ pretrained on $10^6$ clean transitions and is fine-tuned on $5{,}000$ transitions, of which the attacker may modify at most an $r_p$ fraction. We use $r_p=0.1$ in the main experiments. TD-MPC2 uses 512 candidate rollouts with planning horizon 3. All deviations are computed in the learned latent space $z=\mathrm{enc}(s)$. We evaluate on DMControl~\citep{tassa2018deepmind}, MyoSuite~\citep{caggiano2022myosuite}, MetaWorld~\citep{yu2020meta}, and ManiSkill2~\citep{gu2023maniskill2}. We compare \textbf{Clean}, \textbf{SWAAP (Random)}, \textbf{Direct Model Poisoning}, and \textbf{SWAAP}. Direct Model Poisoning directly deploys the Stage~1 target model and is used to serve as a reference point under a stronger threat model of model overwriting. Unless otherwise stated, each Table~\ref{main_results} entry reports mean $\pm$ std over 10 independent seeds. Each seed-level value is itself averaged over approximately 5{,}000 rollout transitions, using 10 episodes for 500-step DMControl tasks and 50 episodes for 100-step tasks. Figure~\ref{fig:results} and Appendix Figure~\ref{swaap_more} also use 10 independent seeds; their error bars report variation across seed-level averages. Other diagnostic ablations report variation across independent evaluation episodes rather than independent runs that aggregate multiple episodes.

\begin{table*}[t!]
\centering
\captionsetup{font=small,skip=2pt}
\caption{
Main attack results. Clean reports return and reference model-level deviation $\delta_{m,\mathrm{ref}}$ on rollout transitions. Attack columns report return and $\delta_m$; SWAAP also reports $\delta_{d,\mathrm{ref}}$ and $\delta_d$ on the fine-tuning buffer. Table~\ref{main_results} reports a stealth-prioritized operating point rather than an exhaustive search over Stage~1 checkpoints and Stage~2 stealth weights. Unless otherwise stated, SWAAP (Random) and SWAAP use $r_p=0.1$ and $\alpha=0.9$; humanoid-walk and lift-cube use $\alpha=0.99$ to bias the operating point toward lower model-level deviation. Stronger attack--deviation trade-offs are shown in Figure~\ref{fig:results} and Appendix~\ref{app:ours_data}.
}
\label{main_results}

\setlength{\tabcolsep}{2.2pt}
\renewcommand{\arraystretch}{1.08}
\resizebox{\textwidth}{!}{
\begin{tabular}{|cc|cc|ccc|cc|cccc|}
\hline
\multicolumn{2}{|c|}{} 
& \multicolumn{2}{c|}{Clean} 
& \multicolumn{3}{c|}{SWAAP (Random)} 
& \multicolumn{2}{c|}{Direct Model Poisoning} 
& \multicolumn{4}{c|}{SWAAP} \\
\multicolumn{2}{|c|}{\multirow{-2}{*}{Env.}} 
& Return & $\delta_{\mathrm{m,ref}}$ 
& Return & $\delta_d$ & $\delta_m$ 
& Return & $\delta_m$ 
& Return & $\delta_{\mathrm{d,ref}}$ & $\delta_d$ & $\delta_m$ \\ 
\hline
\multicolumn{1}{|c|}{}                             & humanoid-walk  & $868\pm15$   & $.092\pm.003$ & $826\pm26$  & $.090\pm.005$    & $.103\pm.004$     & $307\pm87$           & $.174\pm.009$        & \cellcolor[rgb]{ .863,  .902,  .941}$775\pm50$   & \cellcolor[rgb]{ .863,  .902,  .941}$.090\pm.005$ & \cellcolor[rgb]{ .863,  .902,  .941}$.090\pm.005$ & \cellcolor[rgb]{ .863,  .902,  .941}$.110\pm.007$ \\
\multicolumn{1}{|c|}{\multirow{-2}{*}{DMControl}}  & humanoid-run   & $584\pm13$   & $.049\pm.002$ & $ 567 \pm 13 $ & $.052\pm.008$ & $ .053\pm .002 $  & $ 189\pm09 $         & $ .104\pm .001$      & \cellcolor[rgb]{ .863,  .902,  .941}$515\pm14$   & \cellcolor[rgb]{ .863,  .902,  .941}$.051\pm.008$ & \cellcolor[rgb]{ .863,  .902,  .941}$.053\pm.008$ & \cellcolor[rgb]{ .863,  .902,  .941}$.063\pm.002$ \\
\multicolumn{1}{|c|}{}                             & dog-walk       & $798\pm8$   & $.056\pm.001$ & $ 794\pm 10 $ & $.059\pm.002$  & $ .055 \pm .001 $ & $ 523\pm18 $         & $ .086\pm .001$      & \cellcolor[rgb]{ .863,  .902,  .941}$748\pm13$   & \cellcolor[rgb]{ .863,  .902,  .941}$.059\pm.002$ & \cellcolor[rgb]{ .863,  .902,  .941}$.060\pm.002$ & \cellcolor[rgb]{ .863,  .902,  .941}$.061\pm.002$ \\
\multicolumn{1}{|c|}{}                             & dog-run        & $637\pm09$   & $.052\pm.001$ & $607\pm22$   & $.051\pm.004$   & $.054\pm.001$     & $ 270\pm 28$         & $ .081\pm .001$      & \cellcolor[rgb]{ .863,  .902,  .941}$478\pm26$   & \cellcolor[rgb]{ .863,  .902,  .941}$.051\pm.004$ & \cellcolor[rgb]{ .863,  .902,  .941}$.052\pm.004$ & \cellcolor[rgb]{ .863,  .902,  .941}$.064\pm.001$ \\
\multicolumn{1}{|c|}{}                             & cheetah-run    & $838\pm11$   & $.026\pm.001$ & $830\pm23$   & $.032\pm.006$    & $.025\pm.004$     & $ 389\pm 39$         & $ .104\pm .011$      & \cellcolor[rgb]{ .863,  .902,  .941}$806\pm6$    & \cellcolor[rgb]{ .863,  .902,  .941}$.028\pm.002$ & \cellcolor[rgb]{ .863,  .902,  .941}$.034\pm.006$ & \cellcolor[rgb]{ .863,  .902,  .941}$.036\pm.001$ \\ \hline
\multicolumn{1}{|c|}{}                             & pen-twirl-hard & $3574\pm640$ & $.057\pm.003$ & $3024\pm646$  & $.056\pm.008$  & $.079\pm.007$     & $2743\pm 497$        & $ .073\pm .006$      & \cellcolor[rgb]{ .863,  .902,  .941}$2771\pm801$ & \cellcolor[rgb]{ .863,  .902,  .941}$.056\pm.008$ & \cellcolor[rgb]{ .863,  .902,  .941}$.056\pm.008$ & \cellcolor[rgb]{ .863,  .902,  .941}$.101\pm.013$ \\
\multicolumn{1}{|c|}{\multirow{-2}{*}{MyoSuite}}   & reach-hard     & $743\pm22$   & $.042\pm.010$ & $664\pm36$  & $.035\pm.011$    & $.065\pm.006$     & $ 410\pm 344$        & $ .085\pm .007$      & \cellcolor[rgb]{ .863,  .902,  .941}$606\pm198$  & \cellcolor[rgb]{ .863,  .902,  .941}$.035\pm.011$ & \cellcolor[rgb]{ .863,  .902,  .941}$.035\pm.011$ & \cellcolor[rgb]{ .863,  .902,  .941}$.089\pm.030$ \\ \hline
\multicolumn{1}{|c|}{}                             & push           & $1784\pm13$  & $.080\pm.002$ & $1476\pm142$ & $.080\pm.005$   & $.176\pm.018$     & $ 1464\pm 154$       & $ .150\pm .022$      & \cellcolor[rgb]{ .863,  .902,  .941}$1641\pm80$  & \cellcolor[rgb]{ .863,  .902,  .941}$.080\pm.005$ & \cellcolor[rgb]{ .863,  .902,  .941}$.081\pm.005$ & \cellcolor[rgb]{ .863,  .902,  .941}$.171\pm.011$ \\
\multicolumn{1}{|c|}{\multirow{-2}{*}{Meta-World}} & soccer         & $1724\pm9$  & $.033\pm.002$ & $1711\pm26$  & $.050\pm.009$    & $.034\pm.004$     & $ 1555\pm 102$       & $ .083\pm .018$      & \cellcolor[rgb]{ .863,  .902,  .941}$1479\pm96$  & \cellcolor[rgb]{ .863,  .902,  .941}$.046\pm.007$ & \cellcolor[rgb]{ .863,  .902,  .941}$.051\pm.010$ & \cellcolor[rgb]{ .863,  .902,  .941}$.064\pm.008$ \\ \hline
\multicolumn{1}{|c|}{}                             & lift-cube      & $193\pm04$   & $.019\pm.006$ & $178\pm15$  & $.031\pm.027$    & $.071\pm.027$     & $ 136\pm 22$         & $ .210\pm .047$      & \cellcolor[rgb]{ .863,  .902,  .941}$175\pm20$   & \cellcolor[rgb]{ .863,  .902,  .941}$.031\pm.027$ & \cellcolor[rgb]{ .863,  .902,  .941}$.032\pm.027$ & \cellcolor[rgb]{ .863,  .902,  .941}$.068\pm.034$ \\
\multicolumn{1}{|c|}{\multirow{-2}{*}{ManiSkill2}} & pick-cube      & $161\pm07$   & $.028\pm.007$ & $158\pm10$  & $.026\pm.003$    & $.029\pm.012$     & $ 117\pm 11$         & $ .198\pm .022$      & \cellcolor[rgb]{ .863,  .902,  .941}$147\pm14$   & \cellcolor[rgb]{ .863,  .902,  .941}$.025\pm.003$ & \cellcolor[rgb]{ .863,  .902,  .941}$.027\pm.003$ & \cellcolor[rgb]{ .863,  .902,  .941}$.069\pm.028$ \\
\multicolumn{1}{|c|}{}                             & stack-cube     & $170\pm16$   & $.044\pm.010$ & $137\pm13$   & $.042\pm.011$   & $.088\pm.011$     & $ 106\pm 23$         & $ .116\pm .007$      & \cellcolor[rgb]{ .863,  .902,  .941}$141\pm18$   & \cellcolor[rgb]{ .863,  .902,  .941}$.041\pm.011$ & \cellcolor[rgb]{ .863,  .902,  .941}$.042\pm.011$ & \cellcolor[rgb]{ .863,  .902,  .941}$.077\pm.016$ \\ \hline
\end{tabular}
}
\end{table*}

\subsection{Effective and Stealthy Poisoning}
\vspace{-1ex}
\noindent\textbf{Main attack results.}
Table~\ref{main_results} reports return, model-level deviation $\delta_m$, and data-level deviation $\delta_d$. The clean column gives the reference model's natural rollout deviation $\delta_{m,\mathrm{ref}}$. For fine-tuning data, $\delta_d=\|P_{\psi_{\mathrm{ref}}}(s,a)-s'_{\mathrm{data}}\|_2/\|P_{\psi_{\mathrm{ref}}}(s,a)\|_2$, where $s'_{\mathrm{data}}$ is either $s'$ or $\tilde{s}'$; this is the signal available to a defender who observes only the incoming fine-tuning stream and a reference model. SWAAP consistently lowers return while modifying only a bounded fraction of transition targets and keeping $\delta_d$ close to $\delta_{d,\mathrm{ref}}$ in most tasks. The resulting $\delta_m$ varies across environments, so we do not claim model-level indistinguishability; rather, SWAAP demonstrates an effectiveness--stealth trade-off under bounded data-level perturbation. Direct Model Poisoning is stronger in some tasks but assumes direct model overwrite and serves only as a reference point under a stronger threat model. Appendix~\ref{app:more_training_detection} shows that lower $\alpha$ can increase degradation with little change in $\delta_d$.

\noindent\textbf{Pre-training and training-time defenses.}
We evaluate a residual-based CUSUM detector using $x_t=\|z_{t+1}-P_{\psi_{\mathrm{ref}}}(z_t,a_t)\|_2$ and report ROC-AUC and TPR at a clean-calibrated threshold (Appendix~\ref{app:cusum}). On MW-Push, it detects aggressive low-stealth poisoning ($\alpha=0.1$: AUC $0.864$, TPR $0.40$ at FPR $0.06$), but loses power at the main stealth setting ($\alpha=0.9$: AUC $0.509$, TPR $0.08$). Humanoid-Walk at $\alpha=0.9$ similarly gives AUC $0.520$ and TPR $0.10$ at FPR $0.10$. PCA visualizations in Appendix~\ref{app:pca} show that poisoned next-state targets do not form obvious outliers. During fine-tuning, Figure~\ref{fig:trim} shows that SWAAP preserves its effectiveness under TRIM and produces even stronger degradation. We conjecture this occurs because the poisoned transitions are crafted to mislead TRIM into removing clean transitions, thereby amplifying the attack's effect after training.

\noindent\textbf{Stage-wise realization and attack-stealth trade-off.}
Figure~\ref{fig:results} examines the Stage~2 poisoning realization step across stealth weights $\alpha$ and shows that at relaxed stealth, SWAAP's return drop substantially exceeds that of SWAAP (Random), whose effect saturates, indicating that the Stage~1 target provides more useful poisoning directions than random perturbations. Additional $\alpha$ and $r_p$ ablations across all environments are in Appendix~\ref{app:ours_data}. Figure~\ref{fig:scatter_comparison} visualizes the attack-stealth frontier induced by different Stage~1 checkpoints and Stage~2 poisoning settings. In Stage~1, optimization produces saved target-model checkpoints with different return-$\delta_m$ trade-offs; Direct Model Poisoning evaluates these checkpoints directly, while SWAAP attempts to realize selected targets through constrained data poisoning. The comparison exposes the main realization bottleneck. In Humanoid-Walk, SWAAP points closely track the Direct Model Poisoning frontier, showing that Stage~2 can recover much of the target model's effect. In Myo-Pen-Twirl-Hard, however, SWAAP remains separated from the Direct Model Poisoning frontier, showing that harmful targets are not always equally realizable through gradient-matched data poisoning. This target-selection procedure reflects a limitation of the current two-stage design that is not jointly optimized: Stage~1 optimizes for harmful and low-$\delta_m$ target dynamics, but not for how easily those targets can later be realized by Stage~2.

\begin{figure*}[t]
    \centering
    \vspace{-1ex}
    \includegraphics[scale=0.36]{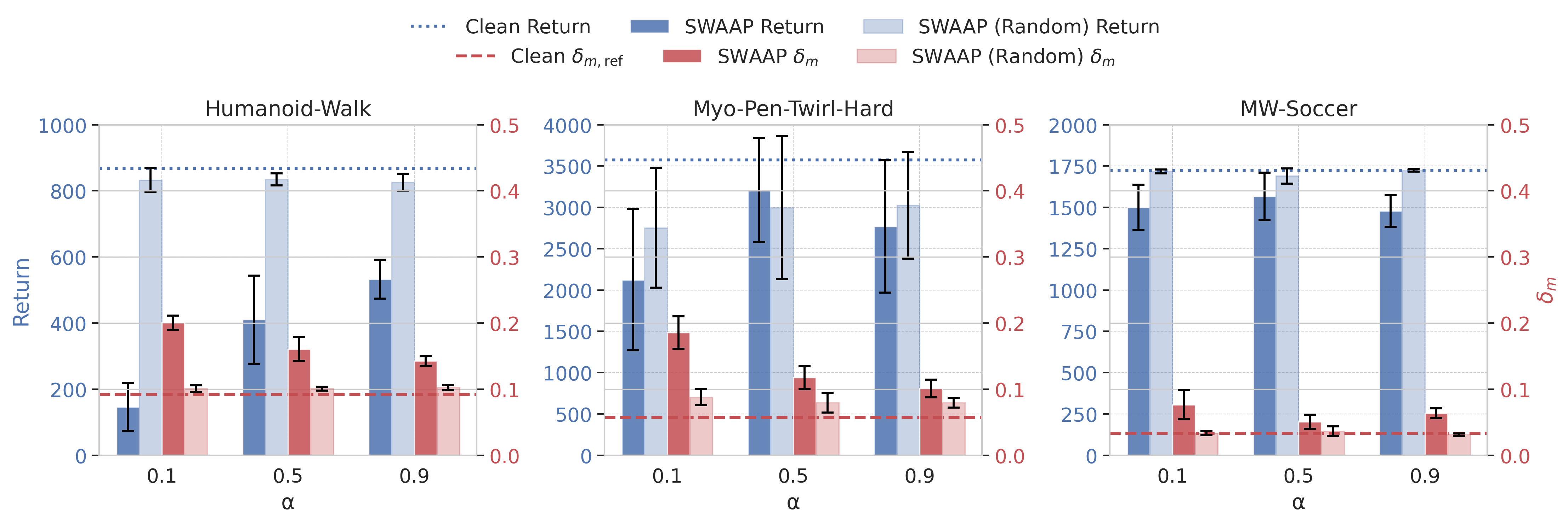}
    \vspace{-2ex}
    \caption{\small 
    SWAAP versus SWAAP (Random) across stealth weights $\alpha$ with $r_p=0.1$. SWAAP generally causes larger return drops under smaller stealth weights.
    }
    \label{fig:results}
\end{figure*}

\begin{figure*}[t]
    \centering
    \begin{subfigure}[t]{0.31\linewidth}
        \centering
        \includegraphics[width=\linewidth]{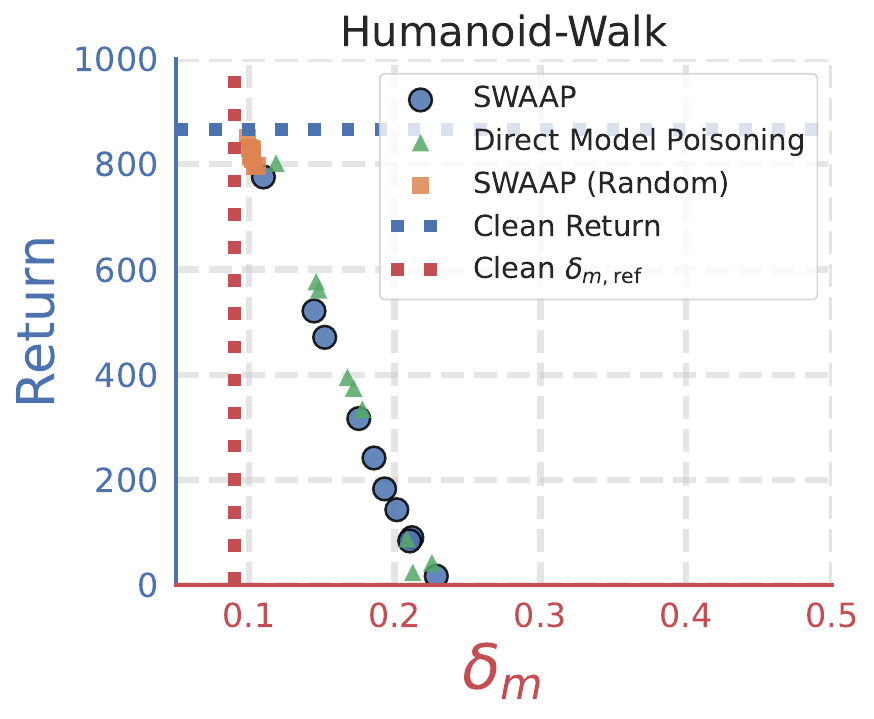}
        \subcaption{Humanoid-Walk frontier}
        \label{fig:humanoid_walk_scatter}
    \end{subfigure}
    \hfill
    \begin{subfigure}[t]{0.31\linewidth}
        \centering
        \includegraphics[width=\linewidth]{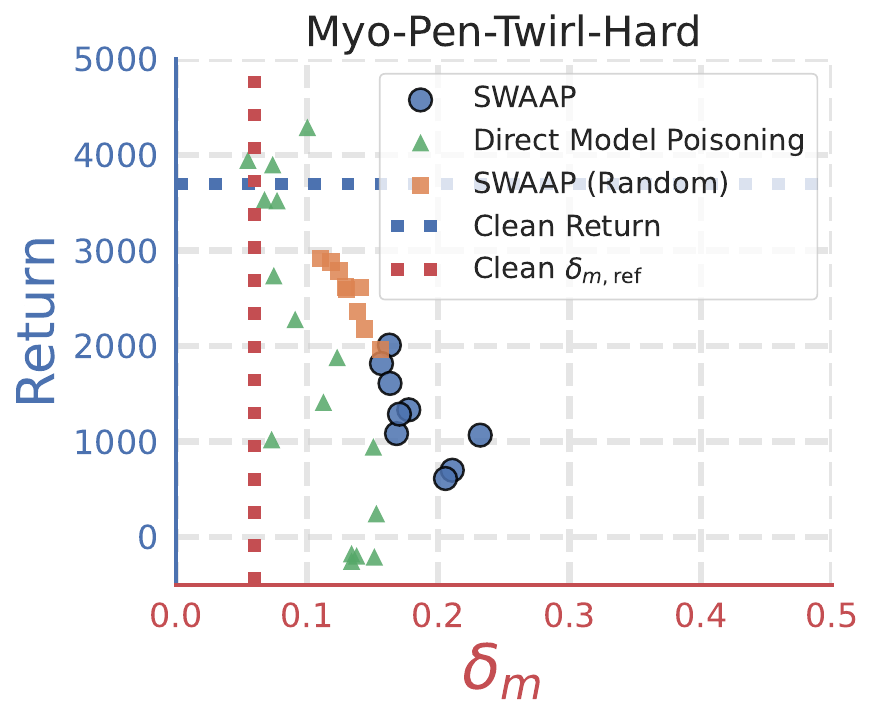}
        \subcaption{Myo-Pen-Twirl-Hard frontier}
        \label{fig:myo_pen_twirl_scatter}
    \end{subfigure}
    \hfill
    \begin{subfigure}[t]{0.31\linewidth}
        \centering
        \includegraphics[width=\linewidth,height=4cm]{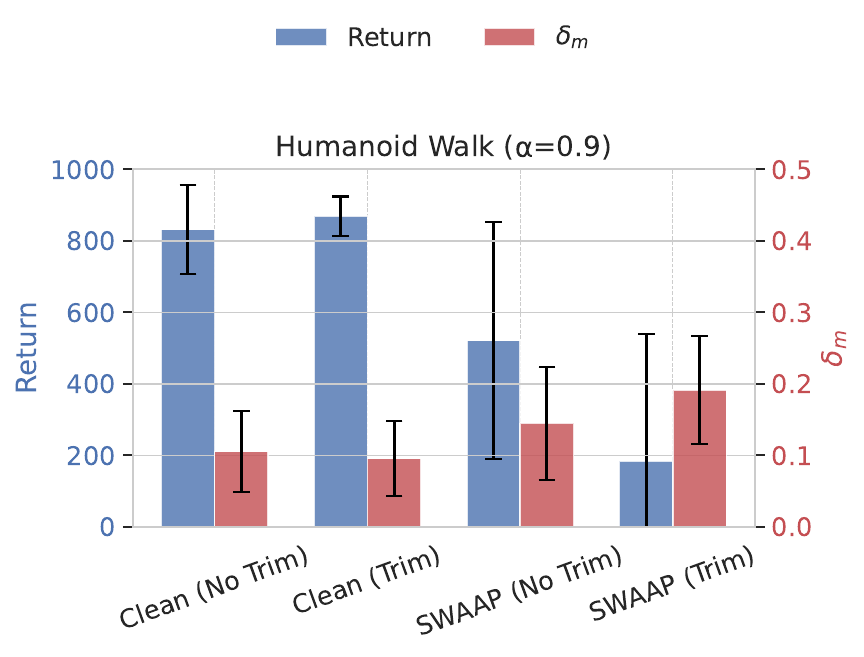}
        \subcaption{TRIM defense}
        \label{fig:trim}
    \end{subfigure}
    
    \caption{\small{
    (a) and (b) Attack-stealth frontier: DMP evaluates saved Stage~1 target models directly, while SWAAP applies Stage~2 data poisoning toward selected targets.
    (c) Robust-training defense: on Humanoid-Walk, SWAAP ($r_p,\alpha=0.1,0.9$) remains effective under TRIM robust training ($\beta=0.2$).}
    }
    \label{fig:scatter_comparison}
\end{figure*}

\vspace{-2ex}
\subsection{Ablations and Generalization}
\vspace{-1ex}
We provide additional ablations and generalization tests in Appendices~\ref{app:stage_bottleneck},~\ref{app:ablations},~\ref{app:generalization}. First, Appendix~\ref{app:stage_bottleneck} studies the relation between Stage~1 target identification and Stage~2 data realization. Direct Model Poisoning in Table~\ref{main_results} evaluates saved Stage~1 target models under an unrealistic model-overwrite threat model, while SWAAP realizes selected targets through bounded data poisoning. The gap between the two reflects the practical cost of realizing a target model only through fine-tuning data, rather than a failure of the two-stage decomposition. We also compare against DMP-data, which directly inserts Stage~1 target predictions without gradient matching; SWAAP is more effective, especially under TRIM, indicating that the gradient-matching realization step is useful.

Second, Appendices~\ref{app:pgd_baseline} and~\ref{app:top_rp} test Stage~2 design choices and alternative baselines, respectively. Replacing the top-$r_p$ transition-selection heuristic with random selection weakens the attack on Humanoid-Walk, supporting the choice of poisoning transitions most inconsistent with the Stage~1 target. On MW-Push, a PGD transition-poisoning baseline is weaker than SWAAP and produces larger data-level deviation, suggesting that local value-reducing perturbations are less effective and less stealthy than target-model gradient matching in this setting.

Finally, Appendices~\ref{app:trajectory_dp}--\ref{app:graybox} evaluate broader settings. With a fixed number of poisoned samples, increasing the fine-tuning buffer weakens but does not eliminate the attack. We also evaluate a trajectory-consistent variant in which a poisoned intermediate latent state is written consistently as both $s'_t$ and $s_{t+1}$ in sequential fine-tuning data. On Humanoid-Walk, this variant achieves $R=784\pm19$ and $\delta_m=0.111\pm0.003$ with $r_p=0.05$, while maintaining data-level reference deviation $\delta_d=.102\pm.007$. This suggests that SWAAP is not limited to unordered transition buffers, although the attack is weaker under the stricter sequential-consistency constraint. On DINO-WM Push-T, SWAAP reduces the success rate after Stage~2 poisoning, providing initial evidence across world-model backbones. In a preliminary gray-box transfer setting, poisoned data are constructed in raw observation space using a single-task surrogate and then applied to a multi-task TD-MPC2 victim with a different latent encoder. This does not replace the white-box setting as a worst-case robustness baseline, but suggests a path toward relaxing exact-parameter and shared-encoder assumptions.

\section{Conclusion and Limitations}\label{sec:conclusion}
We introduced SWAAP, a two-stage data poisoning framework that realizes harmful target dynamics through stealth-constrained fine-tuning data. Across continuous-control tasks and a range of evaluated defenses, SWAAP exposes a practical vulnerability in world-model adaptation pipelines and motivates stronger protection for world-model training data and learned dynamics.

As an initial study of world model poisoning, this work has limitations. First, our experiments primarily 
use TD-MPC2 as the victim model, with DINO-WM as an additional backbone; scaling to larger foundation models is left for future work. Second, SWAAP uses a two-stage approximation: Stage~1 identifies harmful, low-deviation target dynamics, and Stage~2 realizes them via gradient-matched data poisoning. Since some targets are easier to realize than others,  jointly optimizing target selection and data realization is an important next step. Third, we focus on untargeted performance degradation rather than targeted behavior manipulation. 
Finally, while our stealth evaluation covers representative 
detection and defense techniques, it does not rule out stronger 
detectors, deployment-specific defenses, or more stringent constrained-poisoning settings.


\clearpage
\bibliographystyle{plainnat}
\bibliography{references}

\newpage
\appendix
\clearpage
\startcontents[appendices]
\section*{Appendix Table of Contents}
\printcontents[appendices]{}{1}{}
\clearpage
\section*{Appendix}

\section{Use of LLMs}\label{app:llm_usage}

The authors used LLM-based assistants for manuscript preparation and presentation of supplementary materials, including grammar correction, wording suggestions, condensation of prose, LaTeX/formatting assistance, and improving the organization/readability of author-written code artifacts. LLMs were not used as part of the proposed method, theorem derivation, experiment design, result generation, or statistical analysis. All algorithms, experiments, figures, tables, citations, and technical claims were produced, checked, and approved by the authors. No confidential reviewer information, proprietary data, or sensitive personal data were used as LLM inputs.

\section{Broader Impact}\label{app:broader_impact}

This paper studies a new training-time attack surface for learned world models. The primary intended benefit is defensive: by showing that fine-tuning trajectories can manipulate learned dynamics and downstream planning, the work motivates stronger data validation, robust fine-tuning, and monitoring methods for world-model adaptation pipelines. These issues are especially important for embodied agents, robotics, autonomous systems, and other settings where world models may be updated from locally collected experience.

The same techniques could also inform misuse by suggesting ways to craft poisoned fine-tuning data. We mitigate this risk by studying the attack in controlled simulation benchmarks, limiting the threat model to bounded transition-target poisoning, and evaluating representative detection and robust-training defenses. We do not provide instructions for attacking deployed real-world systems. We believe that disclosure is warranted because the vulnerability arises naturally in continual fine-tuning pipelines, and understanding it is necessary for developing effective defenses.

More broadly, this work highlights that the robustness of world-model agents cannot be evaluated only at the final policy or planner level. The training data stream, learned dynamics, and downstream planning procedure are coupled, so future safety evaluations should consider how small changes in adaptation data can propagate into long-horizon behavior. We hope the proposed evaluations encourage further work on secure data collection, provenance tracking, anomaly detection, and robust world-model fine-tuning.

\section{Related Works}\label{app:related_works}
\subsection{World Models}
World models aim to learn compact and predictive representations of environment dynamics that can be used for planning and control. Instead of interacting directly with the environment, an agent can rely on its learned model to simulate trajectories, evaluate policies, and anticipate future states. First introduced in~\citep{ha2018world}, this work introduced a three-component architecture combining a variational autoencoder (VAE), a recurrent neural network (RNN), and a linear controller, demonstrating that policies trained entirely within a latent model can transfer back to the real environment. This work sparked a line of research exploring increasingly powerful and scalable model-based reinforcement learning frameworks.

Dreamer-style agents use latent dynamics models optimized end-to-end with reinforcement learning. DreamerV3~\citep{hafner2024masteringdiversedomainsworld} achieves state-of-the-art performance on visual control and robotic tasks. It employs a recurrent state-space model (RSSM) with deterministic state \(h_t\) and stochastic latent \(s_t\), performing rollouts entirely in latent space. Three heads are trained on the RSSM: a reward predictor \(\hat r_t\), a value function \(\hat V(\cdot)\), and an actor \(\pi\), using imagined rollouts. At decision time, the actor proposes candidate actions, either sampled or taken as the mean.

More recent approaches have investigated architectural advances tailored for large-scale and complex environments. DINO-WM~\citep{zhou2025dinowmworldmodelspretrained} leverages self-supervised vision transformers (ViTs) to improve perception quality, enabling stronger generalization in visually rich settings. Similarly, TD-MPC2~\citep{hansen2024tdmpc2} proposes a temporally abstracted model predictive control framework that combines world models with trajectory optimization, achieving sample-efficient learning and strong performance in high-dimensional continuous control tasks.

Parallel to these developments, diffusion-based world models have emerged as a promising alternative~\citep{pearce2023imitating,janner2022diffuser}. By parameterizing the transition distribution as a denoising diffusion process, these models can capture multi-modal and stochastic dynamics more effectively than conventional Gaussian latent models. Diffusion world models have been shown to improve both planning quality and robustness to uncertainty, making them attractive in settings where dynamics are highly non-deterministic.

\subsection{Model-Based Reinforcement Learning}
Model-based reinforcement learning (MBRL) is a powerful paradigm that improves sample efficiency and enables better generalization compared to purely model-free methods. In MBRL, an agent learns an explicit dynamics model of the environment and leverages this model for planning or policy optimization. We study an agent that leverages a learned world model to interact with an environment formalized as a Markov decision process (MDP) $(S, A, P, R, \gamma, \mu)$, where $\mu$ is the initial state distribution, $S$ is the state space, $A$ is the action space, $P: S \times A \to \Delta(S)$ is the transition kernel, $R: S \times A \times S \to \mathbb{R}$ is the reward function, and $\gamma \in (0,1)$ is the discount factor.  The goal is to find a policy $\pi: S \to \Delta(A)$ that maximizes the expected discounted return $\sum_{i=0}^{T}\gamma^iR(s_t,a_t)$. 

Unlike model-free approaches that learn value functions or policies directly from experience \citep{mnih2015human, lillicrap2016continuous, haarnoja2018soft}, MBRL explicitly learns a parametric approximation $\hat{P}_\psi$ of the transition kernel $P$, often using neural networks \citep{deisenroth2011pilco, chua2018deep, janner2019trust, hansen2024tdmpc2}. This learned dynamics model can then be used for model-predictive control (MPC) \citep{camacho2013model}, trajectory optimization \citep{tassa2012synthesis}, or to generate synthetic rollouts for policy improvement \citep{kaiser2019model, hafner2019learning}.

\subsection{Model Poisoning Attack}
Model poisoning attacks directly compromise a learned model by altering its parameters. They assume the adversary can craft malicious model updates to steer the learning outcome or directly replace model parameters. For example, carefully crafted local malicious updates can harm the performance of a federated learning system \citep{fang2020local}.

In reinforcement learning, model poisoning is particularly concerning when applied to the agent’s learned dynamics model. Small, adversarial modifications to the transition kernel can propagate over planning horizons, misleading policy improvement, and causing degraded performance. This makes model poisoning a uniquely severe threat in model-based RL, as even subtle deviations from the true dynamics can cascade into large errors in long-term decision making.

\subsection{Data Poisoning Attacks and Defenses}
Data poisoning attacks compromise learning systems by corrupting the training dataset to bias the learned model toward an adversarial objective. Unlike model poisoning, which manipulates parameters directly, data poisoning assumes the attacker can only influence the data stream but not the learning algorithm itself, which is a more realistic attack scenario. Data poisoning attacks have been shown to significantly degrade the model performance even with small amounts of poisoned data \citep{poisoning-SVM}. A recent and effective data poisoning technique leverages gradient matching, which optimizes poisoned samples such that their gradients closely align with those of a target adversarial objective \citep{geiping2021witchesbrewindustrialscale}. By ensuring that poisoned data induces updates similar to those of the adversary’s intended solution, gradient matching enables stealthy and highly effective poisoning even under limited attacker control. We also adopt gradient matching in the second stage of our attack methods, where we inject carefully crafted poisoned transitions into the newly collected fine-tuning dataset. 

In reinforcement learning, data poisoning is particularly dangerous since training relies on sequential interactions with the environment. By injecting corrupted transitions into the replay buffer or modifying observed trajectories, an adversary can degrade the long-term performance of the agent, or even embed targeted failures \citep{rakhsha2020policy, zhang2020adaptive}. 

Effective defenses against data poisoning attacks involve detection and training-time strategies. We consider two widely used data poisoning defenses in our work. \citet{chen2021depois} propose De-Pois, an attack-agnostic detection method that identifies poisoned data points with the help of a mimic model trained from clean data samples. By measuring the difference between the samples on the mimic model’s outputs, De-Pois can flag and remove suspicious points, improving robustness without assuming knowledge of the attack type. 

Complementing detection, training-time defenses aim to mitigate the effect of poisoned samples during learning. For example, \citet{poisoning-SVM} introduce the TRIM method and its iterative variant, which estimates and removes a fraction of potentially poisoned points based on deviation and the statistical properties of the data. 

\section{Theory and Derivations}\label{app:theory}

\subsection{Derivation of Transition Gradient}\label{app:theorem1}
\textbf{Theorem 3.1} \emph{The transition gradient of expected return in an MDP with transition dynamics $P_{\psi}$ 
and policy $\pi_{\theta}$ can be expressed by}
\begin{align*}
\nabla_{\psi} J(P_{\psi}, \theta) 
= \mathbb{E}_{(s,a,s') \sim P_{\psi}, \pi_{\theta}}
\Big[ (R(s,a,s') + \gamma V(s')) \nabla_{\psi}\log P_{\psi}(s'|s,a) \Big].
\end{align*}
\textbf{Proof:}
We start from the derivative of the state-value function of an arbitrary initial state $s_0$ and extend the state into the future time indefinitely, and expand it to write it as a recursive form in terms of $M$ function defined below, from which it unrolls to become an infinite series of the sum of $M(s_k)$ weighted by the probability of reaching $s_k$ from $s_0$ in $k$ steps.
\begin{align*}
\nabla_{\psi}V(s_0)
&= \nabla_{\psi}\sum_{a_0}\pi(a_0|s_0)Q(s_0,a_0) \\
&= \nabla_{\psi}\sum_{a_0}\pi(a_0|s_0)\sum_{s_1}P_{\psi}(s_1|s_0,a_0)[R(s_0,a_0,s_1)+\gamma V(s_1)] \\
&= \sum_{a_0}\pi(a_0|s_0)\sum_{s_1}\nabla_{\psi}P_{\psi}(s_1|s_0,a_0)[R(s_0,a_0,s_1)+\gamma V(s_1)] \\
&\quad+ \gamma \sum_{a_0}\pi(a_0|s_0)\sum_{s_1}P_{\psi}(s_1|s_0,a_0)\nabla_{\psi}V(s_1) \\
&= M(s_0) + \gamma \sum_{s_1}\rho_{\pi}(s_0\!\to\! s_1,k\!=\!1)\nabla_{\psi}V(s_1)
  && \rnote{a recursive relation about $\nabla_{\psi}V(s_i)$}\\
&= M(s_0) + \gamma \sum_{s_1}\rho_{\pi}(s_0\!\to\! s_1,1)\!\Big[M(s_1)+ \gamma  \!\sum_{s_2}\rho_{\pi}(s_1\!\to\! s_2,1)\nabla_{\psi}V(s_2)\Big] \\
&= M(s_0) +\sum_{s_1}\gamma \rho_{\pi}(s_0 \!\to\! s_1,1) M(s_1) + \sum_{s_2}\gamma^2 \rho_{\pi}(s_0\!\to\! s_2,2)M(s_2) + \cdots
  && \rnote{unrolling into future steps indefinitely}\\
&= \sum_x\sum_{k=0}^{\infty}\gamma^k \rho_{\pi}(s_0\!\to\! x,k)M(x)
  && \rnote{swap order of the summation and re-indexing}\\
&= \sum_s \eta(s) M(s)
  && \rnote{let $\eta(s)=\sum_{k=0}^{\infty}\gamma^k  \rho_{\pi}(s_0\!\to\! s,k)$} \\
&\propto \sum_s \frac{\eta(s)}{\sum_s \eta(s)} M(s)
  && \rnote{$\frac{\eta(s)}{\sum_s \eta(s)}$ is the stationary distribution of $s$}\\
&=\sum_s d_{\pi}(s)\sum_a\pi(a|s)\sum_{s'}\nabla_{\psi}P_{\psi}(s'|s,a)(R(s,a,s')+\gamma V(s')).
\end{align*}

where $M(s_0)\coloneqq\sum_{a_0}\pi(a_0|s_0)\sum_{s_1}\nabla_{\psi}P_{\psi}(s_1|s_0,a_0)[R(s_0,a_0,s_1)+\gamma V(s_1)]$, and the scaling from dividing the normalization constant $\sum_s \eta(s)$ can be absorbed into the learning rate, and \(\rho_{\pi}(s\rightarrow x,k=1) \coloneqq \sum_a \pi(a \vert s) P(x \vert s, a)\) is the transition probability of reaching $x$ from $s$ at step $k=1$, and the relation $\rho^\pi(s \to x, k+1) = \sum_{s’} \rho^\pi(s \to s’, k) \rho^\pi(s’ \to x, 1)$ is used to write the transition gradient into recursive form.
Therefore,
\begin{align*}
    \nabla_{\psi}J(P_{\psi},\pi) &= \sum_s d_{\pi}(s)\sum_a\pi(a|s)\sum_{s'}\nabla_{\psi}P_{\psi}(s'|s,a)(R(s,a,s')+\gamma V(s'))\\
    &= \mathbb{E}_{(s,a,r,s')\sim P_{\psi}, \pi}[(R(s,a,s')+\gamma V(s'))\nabla_{\psi}\log P_{\psi}(s'|s,a)].
\end{align*}

For planning with a finite horizon $T$, given a time-indexed value function $V_t(s_t)$ with terminal value $V_T(s_T)=0$, the unrolling stops at finite steps. We begin a recursive form of $\nabla_\psi V_t(s_0) $ that is similar to the infinite horizon one derived above
\begin{align*}
\nabla_\psi V_t(s_0) 
&= M_t(s_0) + \gamma \sum_{a_0}\pi(a_0|s_0)\sum_{s_1}P_\psi(s_1|s_0,a_0)\nabla_\psi V_{t+1}(s_1) \\
&= M_t(s_0) + \gamma \sum_{s_1}\rho_{\pi}(s_0\!\to\! s_1,k\!=\!1) \nabla_\psi V_{t+1}(s_1) \\
&= M_t(s_0) + \gamma \sum_{s_1}\rho_{\pi}(s_0\!\to\! s_1,k\!=\!1) \Big[M_{t+1}(s_1) + \gamma \sum_{s_2}\rho_{\pi}(s_0\!\to\! s_2,k\!=\!2)V_{t+2}(s_2)\Big] \\
&= M_t(s_0) + \gamma \sum_{s_1}\rho_{\pi}(s_0\!\to\! s_1,k\!=\!1) M_{t+1}(s_1) + ... + \gamma^{k'} \sum_{s_k}\rho_{\pi}(s_0\!\to\! s_{k'},k\!=\!k') \nabla_\psi V_{t+k'}(s_{k'}) \\
&= \sum_{i=0}^{k'-1}\gamma^i \rho_{\pi}(s_0\!\to\! s_i,i) M_{t+i}(s_i) + \gamma^{k'} \sum_{s_k}\rho_{\pi}(s_0\!\to\! s_{k'},k\!=\!k') \nabla_\psi V_{t+k'}(s_{k'})   \\
&= \sum_{i=0}^{T-t-1}\gamma^i \rho_{\pi}(s_0\!\to\! s_i,i) M_{i}(s_i)
\end{align*}
where in the last step, setting $k'=T-t$ and using $V_T(s_T)=0$ drops the last term, and therefore the finite horizon transition gradient is
\begin{equation*}
    \nabla_\psi J_T(P_\psi, \pi) = \nabla_\psi V_0(s_0)=\mathbb{E}_{(s_t,a_t,r_t,s_{t+1})\sim P_{\psi}, \pi}\big[ \sum_{t=0}^{T-1}\gamma^t\big(R(s_t,a_t,s_{t+1})+\gamma V_{t+1}(s_{t+1})\big)\nabla_{\psi}\log P_{\psi}(s_{t+1}|s_t,a_t)\big]
\end{equation*}

For the deterministic transition $s_{t+1}=f_\psi(s_t,a_t)$ in finite horizon

\begin{align*}
\nabla_\psi V_t(s_0)
&= \nabla_\psi \mathbb{E}_{a\sim\pi(\cdot|s)}\big[R(s_0,a_0,s_1)+\gamma V_{t+1}(s_1)\big] \\
&= \mathbb{E}_{a\sim\pi(\cdot|s)}\big[\nabla_{s_1}(R(s_0,a_0,s_1)+\gamma V_{t+1}(s_1))|_{s_1=f_\psi(s_0,a_0)}\nabla_\psi f_\psi(s_0,a_0,s_1)+ \gamma \nabla_\psi V_{t+1}(s_1) \big] \\
&= \mathbb{E}_{a\sim\pi(\cdot|s)}\big[\sum_{i=0}^{T-t-1}\gamma^t\nabla_{s'}(R(s_i,a_i)+\gamma V_{t+i+1}(s_{i+1}))|_{s_{i+1}=f_\psi(s_i,a_i)}\nabla_\psi f_\psi(s_i,a_i,s_{i+1})\big] 
\end{align*}

\subsection{BOME Details for Stage 1 Model Poisoning}\label{app_bome}
\begin{algorithm}
\small
\caption{Algorithm for Perturbed Model Identification}
\label{model_identification}
\begin{algorithmic}[1]
\State \textbf{Input:} env. dynamics $P$, dynamics model $\psi_0$, policy $\theta_0$, reward model $R$, value net $V_\phi$
\State \textbf{Output:} perturbed model $\hat{\psi}$
\While{$(\psi_k,\theta_k)$ not converged}
    \State Collect rollouts from $(\psi_k,\theta_k)$; use $R$ to form TD targets; update $V_\phi$
    \State $\theta_k^{0} \gets \theta_k$
    \For{$i \gets 1$ to $W$}
        \State $\theta_k^{i+1} \gets \theta_k^{i} + \nabla_{\theta} J(\psi_k,\theta_k^{i})$
    \EndFor
    \State Collect rollouts to form replay buffers $\mathcal{B}_{kW}$, $\mathcal{B}_{pk}$, $\mathcal{B}_{kk}$ from $(\psi_k,\theta_k^{W})$, $(P,\theta_k)$, $(\psi_k,\theta_k)$, respectively
    \State $\Omega \gets [\,]$
    \For{$i \gets 1$ to $N$}
        \State Sample $\mathcal{B}_{kW}$ to compute $\nabla_{\psi} J(\psi_k,\theta_k^{W})$
        \State Sample $\mathcal{B}_{pk}$ to compute $\nabla_{\theta} J(P,\theta_k)$, $\nabla_{\theta}L(P_{\psi_k}, P)$ and $\nabla_{\psi} L(P_{\psi_k},P)$ 
        \State Sample $\mathcal{B}_{kk}$ to compute $\nabla_{\psi} J(\psi_k,\theta_k)$ and $\nabla_{\theta} J(\psi_k,\theta_k)$
        \State Compose $\nabla f(\psi_k,\theta_k)$ and $\nabla q(\psi_k,\theta_k)$ from Eqs.\ref{eqn_grad_f} and \ref{eqn_grad_q}, and $\lambda_k$
        \State Append $\nabla f(\psi_k,\theta_k) + \lambda_k \nabla q(\psi_k,\theta_k)$ to $\Omega$
    \EndFor
    \State $\omega_k \gets \frac{1}{N}\sum_{\omega\in\Omega}\omega$
    \State $(\psi_{k+1},\theta_{k+1}) \gets (\psi_k,\theta_k) - \xi\cdot \omega_k$
\EndWhile
\State \textbf{return} $\hat{\psi}\gets \psi_k$
\end{algorithmic}
\end{algorithm}
{The adversary minimizes the return of the surrogate policy that learns from perturbed transitions by solving the following bilevel optimization problem: 
\begin{equation}
\begin{array}{ll}
\min_{\psi} & J(P, \theta(\psi)) + \lambda L(P_\psi, P)  \\
\text { s.t. }  & \theta(\psi)\in \arg \max_{\theta'} J(P_{\psi}, \theta'). 
\label{eqn_bo_orig}
\end{array}
\end{equation}
The goal is to minimize the constraint-regularized expected return with respect to the optimal policy derived from the poisoned environment.}

This bilevel optimization problem remains challenging due to its nested structure. 
The primary difficulty lies in computing the derivative $\nabla_{\psi}\theta(\psi)$. 
If one directly treats $\theta(\psi)$ as a function of $\psi$, gradient-based methods such as 
hypergradient descent \citep{bard2013practical} update $\psi$ using $\nabla_{\psi}J(P, \theta) = \nabla_{\psi}\theta(\psi)\nabla_{\theta} J(P, \theta),$ but evaluating $\nabla_{\psi}\theta(\psi) = \nabla_{\psi}\arg \max_{\pi_{\theta'}} J(P_{\psi}, \theta')$ 
requires solving linear systems and computing costly second-order Hessians. 
Prior work has attempted to bypass this difficulty either through surrogate approximations 
or heuristics that are difficult to control \citep{pedregosa2016hyperparameter, ghadimi2018approximation, mackay2019self}. 
Moreover, reinforcement learning objectives are highly non-convex, involve high-dimensional state--action spaces, 
and present a complex optimization landscape. As a result, naive bilevel optimization methods often fail to converge reliably, 
making them unsuitable for Problem~\ref{eqn_bo_orig} \citep{liu2021investigating}.

To obtain a more scalable solution, we adopt the 
{first-order dynamic barrier gradient
descent method (BOME) in~\citep{liu2022bome}},
which reformulates the problem by replacing the implicit argmin operator with a value-function constraint. 

This allows the outer objective and inner objective to be solved without explicitly computing the derivative $\nabla_{\psi}\theta(\psi)$. Formally, let $f(\psi, \theta) \coloneqq J(P, \theta) + L(P_\psi, P)$ denote the outer objective, and $q(\psi, \theta) \coloneqq \max_{\pi_{\theta'}} J(P_{\psi}, \theta') - J(P_{\psi}, \theta)$, which measures the suboptimality of $\pi_\theta$ relative to the optimal policy under $P_\psi$. Under the value-function approach, the bilevel problem becomes the following constrained optimization:
\[
\min_{\psi, \theta} f(\psi, \theta) \quad \text{s.t.} \quad q(\psi, \theta) \leq 0.
\]

This transformed problem is solvable by iteratively updating $(\psi, \theta)$ to decrease $f$ while at the same time keeping the constraint $q \leq 0$ satisfied by decreasing $q$ whenever $q>0$ in each step:
\begin{align}
    (\psi_{k+1}, \theta_{k+1}) &\leftarrow (\psi_k, \theta_k) - \xi \omega_k\\
    \text { where }  \quad  \omega_k &= \arg \min_{\omega} ||\nabla f(\psi_k, \theta_k)-\omega||^2 \\
    \text{ s.t. } &\langle \nabla q(\psi_k, \theta_k), \omega \rangle \geq \phi_k
\end{align}

this could be solved in closed form, which gives $\omega_k = \nabla f(\psi_k, \theta_k) + \lambda_k \nabla q(\psi_k, \theta_k)$, with $\lambda_k=\max \left(\frac{\phi_k-\left\langle\nabla f\left(\psi_k, \theta_k\right), \nabla q\left(\psi_k, \theta_k\right)\right\rangle}{\left\|\nabla q\left(\psi_k, \theta_k\right)\right\|^2}, 0\right)$ and $\phi_k$ is chosen to be $\eta q(\psi,\theta)$ or $\eta \left\|\nabla q(\psi,\theta)\right\|^2$ (we used $\eta \left\|\nabla q(\psi,\theta)\right\|^2$ in experiment and take $\eta=0.5$). Therefore, the procedure to optimize $f$ by jointly updating $(\psi, \theta)$ is:
\begin{equation}
    (\psi_{k+1}, \theta_{k+1}) \gets (\psi_k, \theta_k) - \xi [\nabla f(\psi_k, \theta_k) + \lambda_k \nabla q(\psi_k, \theta_k)]
\end{equation}
where $\nabla f(\psi_k, \theta_k) = \nabla_{(\psi_k,\theta_k)} f(\psi_k, \theta_k)$ is the gradient update of outer problem and $\nabla q(\psi, \theta)=\nabla_{(\psi,\theta)} q(\psi, \theta)$ imposes the constraint. Expressed explicitly, the gradient of $f$ and $q$ are:
\begin{align}
\nabla_{(\psi,\theta)} f(\psi_k, \theta_k)
&=
\left(
\lambda \nabla_{\psi}L(P_{\psi_k}, P),
\nabla_{\theta} J(P, \theta_k) + \lambda \nabla_{\theta}L(P_{\psi_k}, P)
\right),
\label{eqn_grad_f} \\
\nabla_{(\psi,\theta)} q(\psi_k, \theta_k)
&\approx
\left(
\begin{aligned}
&\nabla_{\psi}J(\psi_k, \theta^W_k)
- \nabla_{\psi}J(\psi_k, \theta_k), \\
&- \nabla_{\theta} J(\psi_k, \theta_k)
\end{aligned}
\right).
\label{eqn_grad_q}
\end{align}

where we use the shorthand $J(\psi_k, \theta_k) \coloneqq J(P_{\psi_k}, \theta_k)$, 
and $\theta^W_k$ is the $W$-step approximation of the optimal policy 
$\theta^* \in \arg \max_{\theta'} J(P_{\psi}, \theta')$. 

\subsection{Stochastic Transition Functions Case}\label{app:stochastic}
The stochastic world model minimizes the negative log likelihood to fine-tune on data $D$
\begin{equation}
  \mathcal{L}(\psi; D)
= - \sum_{(s,a,s') \in D} \log P_{\psi}(s' \mid s,a).  
\end{equation}

where $s'\sim P(\cdot|s,a)$, and $P$ is the true environment transition, the loss function used in the bilevel optimization is defined as 
\begin{equation}
L(P_\psi, P) = 
    \mathbb{E}_{(s,a)\sim P, \pi_{\theta(\psi)}}\big[D_{\mathrm{KL}}(P(\cdot|s,a)\| P_{\psi}(\cdot|s,a))\big]
\end{equation}

For \emph{stochastic} transitions, we define the residual as the KL divergence between the empirical next-state distribution induced by samples $(s,a,s') \in D$ and the model distribution:
\begin{equation}
e_{\psi}(s,a) = D_{\mathrm{KL}}\!\left(P(\cdot \mid s,a)\middle\|P_{\psi}(\cdot \mid s,a)\,\right),
\end{equation}

For \emph{stochastic} transition models, the gradients we defined in stage 2 are shown below
\begin{equation}
\begin{aligned}
&G_{\text{real}}
=
\mathbb{E}_{(s,a)\sim \tilde D}
\Big[
\nabla_{\psi_0}
D_{\mathrm{KL}}\!\left(P_{\psi_0}(\cdot \mid s,a)\middle\|\hat{P}_{\tilde{D}}(\cdot \mid s,a)\,\right)
\Big], \\
&G_{\text{target}}
=
\mathbb{E}_{(s,a)\sim D_{\text{all}}}
\Big[
\nabla_{\psi_0}
D_{\mathrm{KL}}\!\left(
P_{\psi_0}(\cdot \mid s,a)
\,\middle\|\,
P_{\hat\psi}(\cdot \mid s,a)
\right)
\Big].
\end{aligned}
\end{equation}
where we use $\hat{P}_{D}(\cdot \mid s,a)$ denotes the empirical distribution of next states associated with $(s,a)$ in the dataset $D$, in which given a $(s,a)$ pair an agent can sample multiple $s'$ using environment dynamics or its transition models, establishing a distribution.

The loss function of Stage 2 is defined as below.
\begin{equation}
\small
\begin{aligned}
L_P^{\text{sto}}
&=(1-\alpha)\big(1-\cos(G_{\text{real}},G_{\text{target}})\big) +\alpha\!\!\sum_{(s_i,a_i)\in D_p}
\Big|
D_{\mathrm{KL}}\!\big(P(\cdot\mid s_i,a_i)\,\|\,\hat P_{D_p}(\cdot\mid s_i,a_i)\big) -
D_{\mathrm{KL}}\!\big(P(\cdot\mid s_i,a_i)\,\|\,P_{\hat\psi}(\cdot\mid s_i,a_i)\big)
\Big|.
\end{aligned}
\label{dp_objective_sto}
\end{equation}

For stochastic world models, we define fine-tune data-level deviation $\delta_d = \frac{1}{|\tilde{D}|} \sum_{(s_i,a_i)\in\tilde{D}}
D_{\mathrm{KL}}\!\big(
P(\cdot|s_i,a_i)
\,\|\,\hat P_{\tilde{D}}(\cdot | s_i,a_i)
\big)$ and test-time model-level deviation $\delta_m = \frac{1}{T} \sum_{i=0}^{T}
D_{\mathrm{KL}}\!\big(
P(\cdot | s_i,a_i)
\,\|\,\hat P_\psi(\cdot | s_i,a_i)
\big)$
where $T$ is the number of observed transitions.

\section{Implementation Details}\label{app:implementations}
We instantiate the experiment based on the TD-MPC2 framework. Thus all neural networks including policy, dynamics model, encoder and value function have MLP architecture with several layers, for detailed information about architecture, please refer to~\citep{hansen2024tdmpc2}.
\subsection{Model Predictive Control}\label{mpc_details}


\begin{algorithm}[H]
\caption{General Model Predictive Control (MPC)}
\label{mpc_algorithm}
\begin{algorithmic}[1]
    \State \textbf{Input:} World model $\psi$, current state $s_t$, goal $o_g$ (optional) horizon $H$, number of rollouts $\texttt{num\_samples}$
    \State Encode: $z_t = \text{enc}(s_t)$, $z_g =\text{enc}(o_g)$ if goal is given
    \For{each MPC step}
    \State Sample $\texttt{num\_samples}$ candidate action sequences $\{a_{0:H-1}^i\}_{i=1}^{\texttt{num\_samples}}$ (from a policy 
    \Indent
        prior, an action sampler, or learned actor)
    \EndIndent
    \For{each sequence $i$}
        \State Roll out latent trajectory $\hat z_{1:H}^i = P_\psi(\hat z_t, a_{0:H-1}^i)$
        \State Evaluate cost or return:
        \[
        J^i =
        \begin{cases}
            \sum_{h=0}^{H-1} \gamma^h r(\hat z_h^i, a_h^i) + \gamma^H Q(\hat z_H^i) & \text{(value/bootstrap)} \\
            \|\hat z_H^i - z_g\|^2 & \text{(goal-closeness)} \\
            \text{other task-specific objective} &
        \end{cases}
        \]
    \EndFor
    \State Select best sequence(s) according to $J^i$
    \State Refit a sampling distribution to top-$K$ sequences
\EndFor
\State \textbf{return} first action or first $k$ actions from selected sequence or sampler
\end{algorithmic}
\end{algorithm}

We provide a general formulation of Model Predictive Control (MPC), which abstracts across variants such as TD-MPC2~\citep{hansen2024tdmpc2}, DINO-WM~\citep{zhou2025dinowmworldmodelspretrained}. In this section, we occasionally use $\hat{z}$ to denote an imagined latent, compared to the real latent $z$ encoded from the real state $s$ from the environment.

To model the dynamics, TD-MPC2~\citep{hansen2024tdmpc2} learns the world model $P_{\psi}(z'|z,a)$ that predicts next latent state given current latent state and action, where $z=enc(s)$ is the latent state from a learned encoder, and its agent plans by using the learned world model $P_{\psi}$ together with a learned prior policy \(\pi_{\mathrm{prior}}\). At each planning step, the planner first obtains \texttt{num\_pi\_trajs} candidate trajectories by rolling out \(\pi_{\mathrm{prior}}\) under transition $P_{\psi}$; these yield the policy-guided action sequences. The planner then samples an additional \(\texttt{num\_samples}-\texttt{num\_pi\_trajs}\) random action sequences from a stochastic proposal (e.g., i.i.d. or Gaussian-perturbed sequences). During the inner improvement/refit loop, the action components coming from the prior policy are treated as guidance and are typically held fixed, while only the remaining (random) action sequences are optimized. This hybrid design reduces search dimensionality and biases search toward plausible behavior while still allowing exploratory refinement.

DINO-WM~\citep{zhou2025dinowmworldmodelspretrained} does not use a learned prior. Instead, it draws many random action sequences, rolls each sequence forward in the world model $P_{\psi}$ for horizon $H$, and scores each rollout by a final-state objective (the distance to a goal latent $z_g$, e.g., $J = \|\hat z_H-z_g\|_2$). The top-performing sequences are retained as elites and the sampling distribution is refit to those elites (CEM-style); this process repeats until the sampling distribution concentrates on sequences that reach the goal.

The above MPC algorithm highlights the shared structure: rolling out candidate 
action sequences over a planning horizon $H$ using a learned world model 
$P_\psi$, evaluating them under a task-specific objective (e.g., 
bootstrapped return, goal closeness, or other criteria), and executing 
the first action of the best sequence. The number of sampled rollouts 
$\texttt{num\_samples}$ and the planning horizon $H$ are hyperparameters 
that directly affect the quality of planning and computation cost. 
The general MPC procedure is summarized in Algorithm~\ref{mpc_algorithm}.

\begin{table}[!t]
\centering
\small
\caption{Key hyperparameters used in experiments; other parameters are the same as in TD-MPC2}
\label{tab:hyperparams}
\begin{tabularx}{\linewidth}{%
  >{\raggedright\arraybackslash}p{2.6cm}
  >{\centering\arraybackslash}p{3.2cm}
  X
}
\toprule
\textbf{Hyperparameter} & \textbf{Typical value(s)} & \textbf{Description} \\
\midrule
\multicolumn{3}{l}{\textbf{Model Poisoning Hyperparameters}} \\[4pt]

\text{obs\_dim}  &
17-223 &
The range of observation dimensions for tasks \\

\text{action\_dim}  &
6-39 &
The range of action dimensions for tasks  \\

\text{latent\_dim}  &
512 &
The latent state dimension to encode observation state \\
H  &
3 &
Planning horizon length, which is the number of rollout steps into the future when planning\\

\text{num\_samples}  &
512 &
Total number of trajectories for planning, including policy trajectories and random trajectories\\

\text{policy\_samples}  &
24 &
number of trajectories for planning that are from policy prior\\

\text{iterations}  &
6 &
number of iterations done to optimize the actions in planning\\

\(\lambda\)  &
\(\{0,\,1,\,10,\,100\}\) &
Consistency coefficient applied to the $L(P,P_\psi)$ term during model poisoning in \Eqref{eqn_bo_orig} \\[6pt]

\(W\) &
\(\{30\}\) &
The number of update steps used to approximate $\theta^*_k$ \\

\(N\)  &
\(\{16\}\) &
The size of $\Omega$, the number of small updates aggregated to compute gradients of $f_k,q_k$ and $\lambda_k$\\

\(\text{sample\_batch\_size}\) &
\(\{256\}\) &
Number of transitions used to calculate each sample in $\Omega$\\

\(\text{num\_step}\) &
\(\{100,500\}\) &
The imagined rollout length, 500 for DMControl tasks and 100 for others. \\

\(\text{buffer\_size}\) &
\(\{500\}\) &
Buffer size for every buffer used in Algorithm~\ref{model_identification}. \\

\midrule
\multicolumn{3}{l}{\textbf{Data Poisoning Hyperparameters}} \\[4pt]

\(r_p\) &
\(\{0.1,\,0.2,\,0.3\}\) &
Fraction of the dataset (or proportion of trajectories/transitions) modified by adversary. \\

\(\alpha\)  &
\(\{0.1,\,0.5,\,0.9, \,0.95,\,0.99\}\) &
Data poisoning regularization coefficient in \Eqref{dp_objective}\\[6pt]

\(\text{poison\_steps}\) &
\(\{5000\}\) &
The number of gradient matching update steps. \\

\(\text{step\_size}\) &
\(\{10\}\) &
Step size to update the poisoned samples in gradient matching. \\

\(\text{noise\_scale}\) &
\(\{0.1,0.3,0.5\}\) &
The random perturbation scale of SWAAP (Random) \\

\(\text{train\_epochs}\) &
\(\{100,500\}\) &
The number of training epochs for fine-tuning, 100 for DMControl tasks and 500 for others. \\

\(\text{learning\_rate}\) &
\(\{0.01, 0.0001\}\) &
Learning rate of fine-tuning, 0.01 for MyoSuite and 0.0001 for others. \\

$|D|$ &
\(\{5000\}\) &
Size of the fine-tuning dataset. \\

$|D_\text{all}|$ &
\(\{50000\}\) &
Size of the dataset to approximate $G_\text{target}$ \\

\bottomrule
\end{tabularx}
\end{table}

\subsection{Hyperparameters}\label{app:parameters}
We report our SWAAP key hyperparameters in Table~\ref{tab:hyperparams}. Other parameters are the same as stated in TD-MPC2~\citep{hansen2024tdmpc2}. 

\subsection{Stage 1: Target Model Identification}\label{app:model poisoning implementation}

The model poisoning starts with a trained TD-MPC2 checkpoint (such as humanoid-walk-2.pt from TD-MPC2 model base), from which following components are extracted as input to Algorithm~\ref{model_identification}, the trained policy network $\pi_\theta$ as $\theta_0$, the dynamics model $P_\psi$ is used as $\psi_0$, the value network (a $Q$ net in TD-MPC2), reward model, and an encoder, where the reward model and encoder will be freezed during model poisoning for there are no more information about the reward and encoding to be gained in our adversarial training. The model-based agent consists of a prior policy $\theta$ and transition model $\psi$, and $\pi_{\theta(\psi)}$ stands for the agent's policy that it takes after conducting MPC planning in imagined trajectories generated from the prior policy $\theta$ and transition $\psi$.

During each iteration of update, the trajectory rollouts are collected from $(\psi_k, \theta^W_k)$, $(P, \theta_k)$, $(\psi_k, \theta_k)$ to form $\mathcal{B}_{kW}$, $\mathcal{B}_{pk}$, $\mathcal{B}_{kk}$, which means $\mathcal{B}_{kW}$ is formed by trajectories generated by MPC policy $\pi_{\theta_{k}^W}(\psi_k)$ and transition $\psi_k$, $\mathcal{B}_{pk}$ is formed by trajectories generated by MPC policy $\pi_{\theta_k}(\psi_k)$ and environment transition $P$, and $\mathcal{B}_{kk}$ is formed by trajectories generated by MPC policy $\pi_{\theta_k}(\psi_k)$ and transition $\psi_k$. Using the reward model, the value network is updated on trajectories collected from $(\psi_k, \theta_k)$ by minimizing the TD error.

Given data $(s,a,s')$, the policy return gradient $\nabla_{\theta_k} J(\psi,\theta)$ is computed using the TD-MPC2 update\_pi function, which varies the action to maximize the value estimate $Q(s,a_{pred})$ and a scaled action entropy at the same time, where $a_{pred}$ is sampled from prior policy $\theta_k$ given state $s$. The transition return gradient $\nabla_{\psi_k} J(\psi,\theta)$ is computed by varying the prediction $s'_{pred}=P_\psi(s,a)$ to minimize the value estimate of predicted next state $Q(s'_{pred}, a'_{pred})$ where $a'_{pred}$ is sampled from prior policy $\theta_k$ given state $s'_{pred}$.

During training of Stage $1$, we evaluate $J(P,\pi_{\theta_0}(\psi_k))$, the return of an unmodified policy using the perturbed world model $\psi_k$ to plan in real environment $P$, for $10$ episodes every $10$ iterations, producing figures like Fig.~\ref{fig:model_poisoning_training_curves}, from which we can select the identified perturbed world model based on the testing performance (ideally one with low return while $\delta_m$ is also low), and this identified dynamics model will be used to propose the perturbed next state in Stage $2$.

\subsection{Stage 2: Data Poisoning}
The first steps in Stage $2$ involves collecting two datasets, one large dataset of $50,000$ transitions (which translates to $100$ independent trajectories for DMControl tasks $500$ for all other tasks) for estimating the target gradient $G_{target}$, and one small dataset of $5,000$ transitions for $G_{real}$. During data collection, each action has a $3\%$ chance of being replaced by a random uniform action, which is done for data diversity and to cover enough state-action pairs. 

Given the fine-tune dataset, the deviations $e_\psi(s,a,s')$ of each transition are calculated and ranked, and the top $r_p$ transitions are selected to be poisoned, which are considered to be more vulnerable than other transitions.  To ensure the perturbation remains stealthy, after each gradient matching step, we project the perturbed state $\tilde{s}'$ back to the L2 norm ball around the original state $s'$, but since TD-MPC2 world model constrains the output prediction within a valid range by architectural design, this step is not needed in our main experiments.

Once the poisoned data are crafted, we choose the fine-tuning hyperparameters (train\_epochs and learning\_rate) so that fine-tuning the clean world model on clean data of the same size does not degrade performance. This ensures that the observed degradation is caused by poisoning rather than overfitting during fine-tuning.

SWAAP (Random) use the same setting in Stage $2$ like SWAAP, and the only difference comes from that the perturbed model $P_{\hat{\psi}}$ is not learned but crafted from applying a random perturbation of scale being $noise\_scale$ elementwise to the true transition, which is $\tilde{s}'= s'\bigodot (1+\epsilon)$ with $\epsilon\sim[-noise\_scale, noise\_scale]$, we report SWAAP (random) with $noise\_scale=0.3$ in Table~\ref{main_results} and in Appendix~\ref{app:ours_data}.

\subsection{Hardware Specification and Computational Cost}\label{app:hardware}
We conduct all experiments on a workstation with an Intel Core Ultra 9 285K CPU, an NVIDIA GeForce RTX 5080 GPU with 16GB VRAM, and 64GB system RAM.

Stage 1 takes approximately 1 hour on DMControl and MetaWorld and 2 hours on MyoSuite. Stage 2 takes roughly 10 minutes. Our attack designs intentionally reduce computational overhead: (1) Stage 1 uses a first-order dynamic barrier method, avoiding the expensive Hessian computations typically required in bilevel optimization; (2) Stage 2 employs a modified gradient-matching step that selects the most influential data points for poisoning instead of exhaustively searching over all possible combinations, which significantly reduces computational cost.

\section{Main Extended Results}\label{app:extended_main}

\subsection{SWAAP and SWAAP (Random)} \label{app:ours_data}
We show more results on SWAAP and SWAAP(Random) in Figure~\ref{swaap_more}, comparing the return and $\delta_m$ of various tasks under SWAAP, SWAAP(Random) with the clean return and clean $\delta_m$ at different poison ratios $r_p$. A higher $\alpha$ constrains model deviation more strictly, reducing attack strength but improving stealth. A lower $\alpha$ relaxes the deviation constraint and increases performance degradation. A higher data poisoning rate $r_p$ will also strengthen our attack performance.

Empirically, across all tested $\alpha$ and $r_p$, we did not observe gradient explosion or vanishing. Optimization remained numerically stable throughout Stage 2.

\begin{figure}
    \centering
    \includegraphics[scale=0.38]{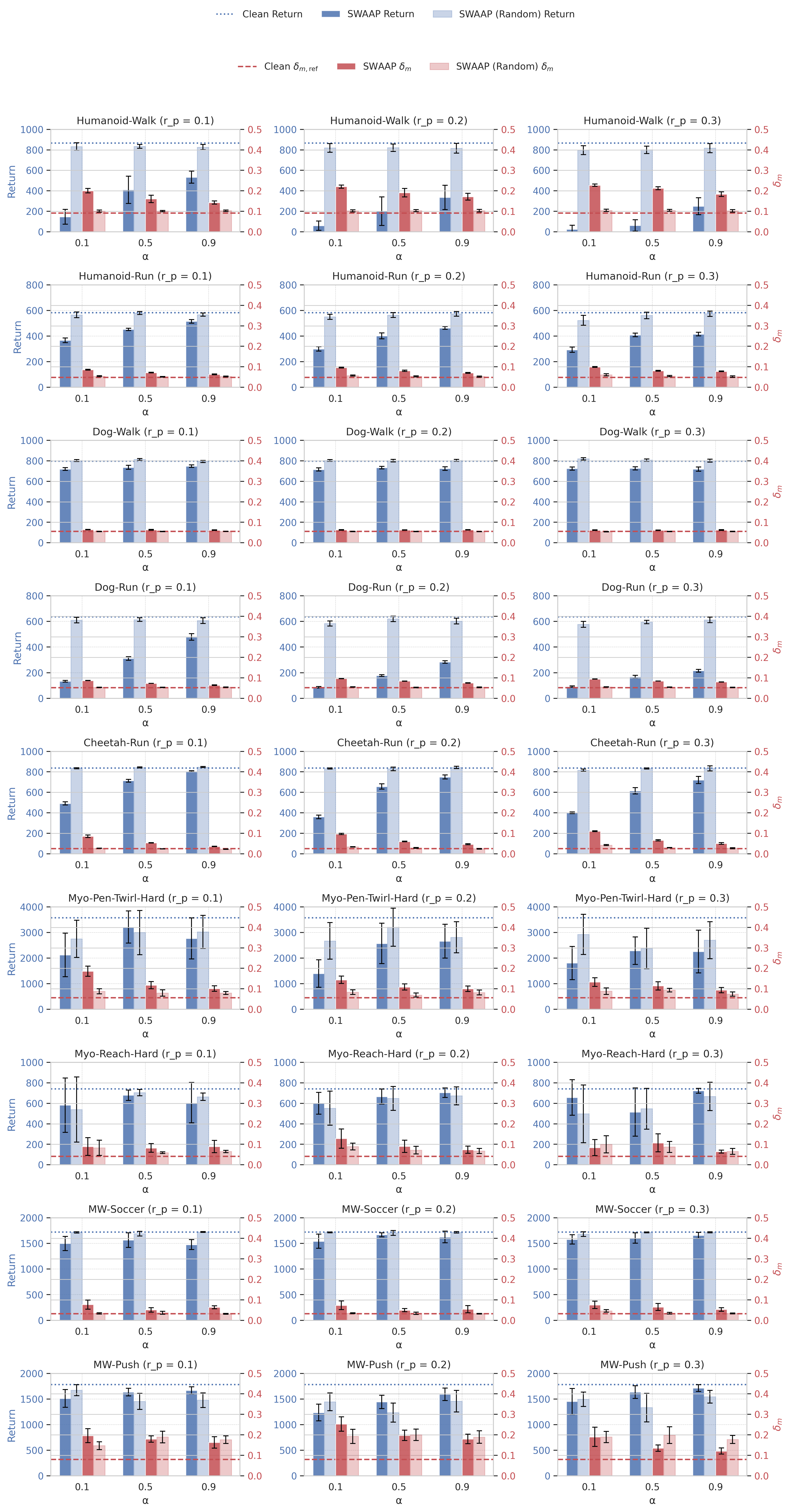}
    \caption{
    SWAAP and SWAAP (Random) across poisoning ratios $r_p$ and stealth weights $\alpha$. Blue bars show return and red bars show model-level deviation $\delta_m$. Lowering $\alpha$ often increases SWAAP's return degradation more than SWAAP (Random), showing that the Stage~1 target provides useful poisoning directions beyond random perturbation.
    }
    \label{swaap_more}
\end{figure}

\subsection{Model Poisoning Training Curve}\label{app:model_poisoning_training_curves}
We show the stage one model training curve in Figure~\ref{fig:model_poisoning_training_curves}, which comes from testing the agent that directly uses the perturbed model for $10$ episodes after every few iterations of updates. From these plots, we observe that $\lambda$ suppresses the model's deviation, but the varying influence of $\lambda$ across different environments implies that different $\lambda$ values are required to constrain the model's perturbation. 
\begin{figure}[!t]
    \centering
    \includegraphics[scale=0.3]{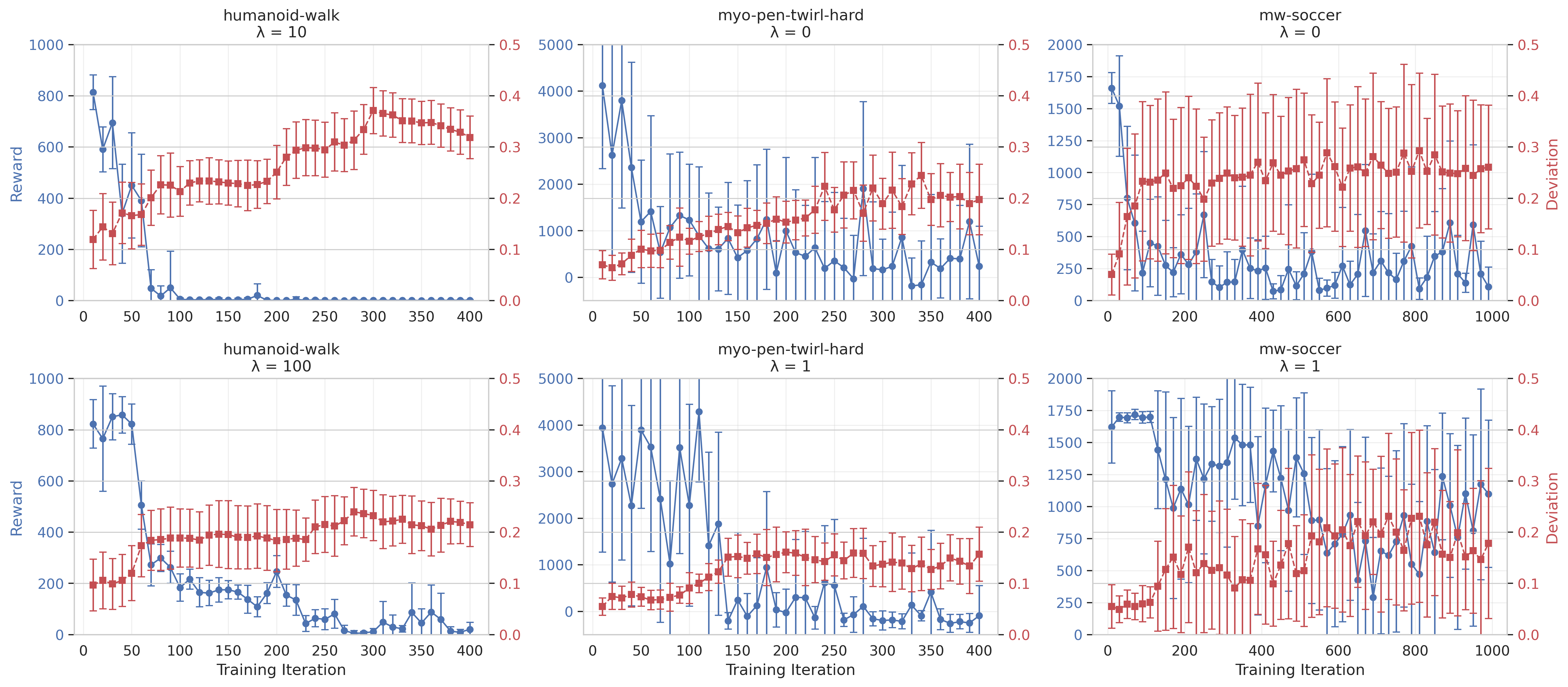}
    \caption{Model poisoning curves (referred to as Direct Model Poisoning in Table~\ref{main_results}) of return and $\delta_m$ tested across different training iterations under different $\lambda$, each testing is the average of $10$ test episodes, and thus each data point is a single run with $10$ episodes. From these plots, we observe that $\lambda$ is suppressing the deviation of the model, but the varying influence of $\lambda$ over different environments implies that a different $\lambda$ is required to constrain the perturbation of the model.}
    \label{fig:model_poisoning_training_curves}
\end{figure}

\subsection{Pre-Training Detection}\label{app:more_training_detection}

Figure~\ref{pretraining_detection} presents additional results on pre-training detection, showing the distribution of data-level deviation before and after poisoning under different values of the poisoning regularization coefficient $\alpha$. As $\alpha$ increases, the deviation of the poisoned transitions becomes more similar to those of clean data, making the attack more difficult to detect using deviation-based filtering.

\subsubsection{Residual-Based CUSUM Pre-Training Detector}
\label{app:cusum}

We evaluate a CUSUM-style sequential detector to screen candidate fine-tuning trajectories before they are used for model updates. Instead of using the latent-state norm $\|z_{t+1}\|_2$, we use a transition-residual statistic measured against the clean reference world model:
\[
x_t=\|z_{t+1}-P_{\psi_{\mathrm{ref}}}(z_t,a_t)\|_2 .
\]
This statistic is aligned with pre-training detection because a defender with a reference world model can test whether each observed transition is anomalous relative to the predicted next latent state.

Given clean calibration trajectories, we estimate a robust center and scale using the median $m$ and median absolute deviation $\mathrm{MAD}$ of the residual signal. For each candidate trajectory, we form centered residuals $r_t=x_t-m$ and update a one-sided CUSUM statistic
\[
S_t=\max(0,S_{t-1}+r_t-\kappa),
\qquad
\kappa=c_{\kappa}\mathrm{MAD}.
\]
We use the trajectory-level score $\max_t S_t$ for detection. The threshold is chosen from held-out clean trajectories to target a false-positive rate of $5\%$, and we report ROC-AUC, TPR at the clean-calibrated threshold, and the realized empirical FPR.

For Humanoid-Walk, we use 100 clean trajectories (50{,}000 transitions) to estimate the residual median/MAD and 10 separate held-out clean trajectories (5{,}000 transitions) to estimate empirical FPR; we compare against 10 poisoned trajectories. Because this held-out set contains only 10 clean trajectories, the empirical FPR is quantized in increments of $0.10$. For MW-Push, we use 500 clean trajectories (50{,}000 transitions) for calibration and 50 held-out clean trajectories (5{,}000 transitions) for empirical FPR; we compare against 50 poisoned trajectories.

\begin{table}[t]
\centering
\caption{
Residual-based CUSUM pre-training detection. The detector uses
$x_t=\|z_{t+1}-P_{\psi_{\mathrm{ref}}}(z_t,a_t)\|_2$
and scores each trajectory by $\max_t S_t$. Thresholds target FPR $5\%$ on held-out clean trajectories; empirical FPR differs slightly because of finite held-out sets.
}
\label{tab:cusum_residual}
\begin{tabular}{c|c|c|c|c|c}
\toprule
Environment & $\alpha$ & AUC & TPR & Emp. FPR & Mean residual \\
\midrule
Humanoid-Walk & 0.1  & 0.665 & 0.20 & 0.10 & 0.4034 \\
Humanoid-Walk & 0.5  & 0.530 & 0.10 & 0.10 & 0.3821 \\
Humanoid-Walk & 0.9  & 0.520 & 0.10 & 0.10 & 0.3745 \\
Humanoid-Walk & 0.99 & 0.520 & 0.10 & 0.10 & 0.3732 \\
\midrule
MW-Push & 0.1 & 0.864 & 0.40 & 0.06 & 0.3810 \\
MW-Push & 0.5 & 0.550 & 0.08 & 0.06 & 0.3465 \\
MW-Push & 0.9 & 0.509 & 0.08 & 0.06 & 0.3334 \\
\bottomrule
\end{tabular}
\end{table}

\begin{figure}[t]
    \centering
    \begin{subfigure}[t]{0.31\linewidth}
        \centering
        \includegraphics[width=\linewidth]{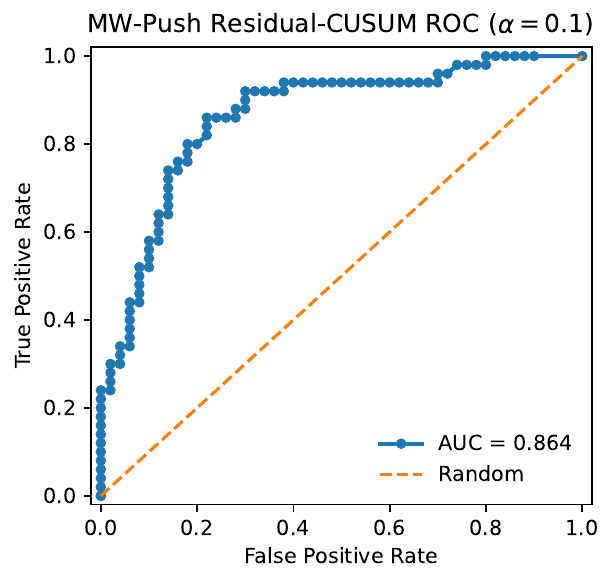}
        \subcaption{$\alpha=0.1$}
    \end{subfigure}
    \hfill
    \begin{subfigure}[t]{0.31\linewidth}
        \centering
        \includegraphics[width=\linewidth]{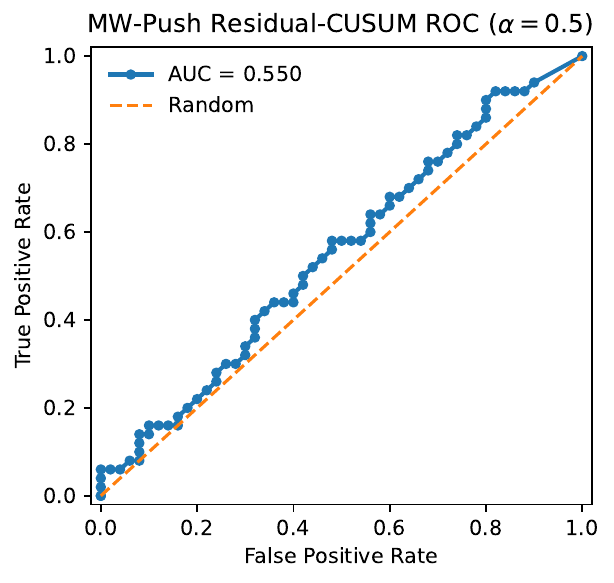}
        \subcaption{$\alpha=0.5$}
    \end{subfigure}
    \hfill
    \begin{subfigure}[t]{0.31\linewidth}
        \centering
        \includegraphics[width=\linewidth]{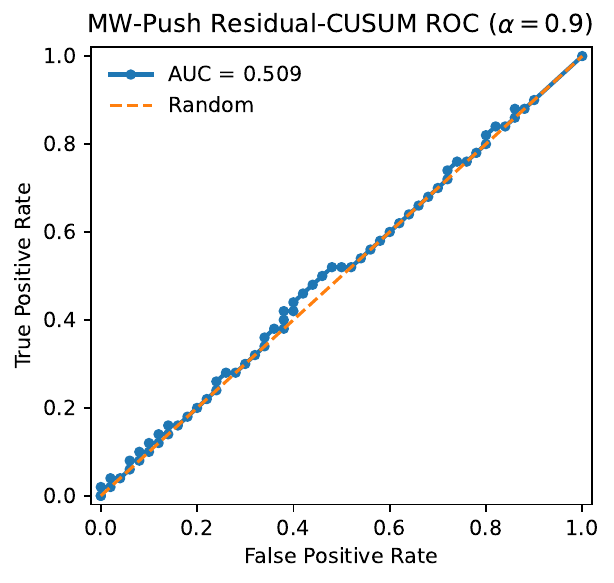}
        \subcaption{$\alpha=0.9$}
    \end{subfigure}

    \caption{
    Trajectory-level ROC curves for the residual-based CUSUM detector on MW-Push. Each candidate fine-tuning trajectory is assigned the score $\max_t S_t$, and ROC curves are computed by treating clean and poisoned trajectories as negative and positive examples. The aggressive low-stealth setting $\alpha=0.1$ is more detectable, while the stealth-prioritized setting $\alpha=0.9$ lies close to the chance diagonal, consistent with Table~\ref{tab:cusum_residual}.
    }
    \label{fig:cusum_roc}
\end{figure}

Table~\ref{tab:cusum_residual} and Figure~\ref{fig:cusum_roc} show that the detector separates aggressive low-$\alpha$ attacks more clearly, especially on MW-Push: at $\alpha=0.1$, AUC reaches $0.864$ and TPR is $0.40$ at empirical FPR $0.06$. However, under the stealth-prioritized settings used in the main experiments, the detector has little power: MW-Push at $\alpha=0.9$ gives AUC $0.509$ and TPR $0.08$ at empirical FPR $0.06$, while Humanoid-Walk at $\alpha=0.9$ gives AUC $0.520$ and TPR $0.10$ at empirical FPR $0.10$. These results do not prove indistinguishability against all sequential detectors; rather, they show that a natural residual-based CUSUM screen detects aggressive low-stealth poisoning but loses power under SWAAP's stealth-regularized operating points.

\subsubsection{Latent-Space PCA Visualization}
\label{app:pca}

We additionally visualize data-level stealthiness using PCA in the latent space used by the world model. This visualization is intended as a qualitative sanity check complementary to the scalar deviations in Table~\ref{main_results}. For Humanoid-Walk with $\alpha=0.9$, we take the $500$ transitions selected for poisoning and compare their clean next-state latents $z'$, poisoned next-state latents $\tilde z'$, and clean/reference-model predicted next-state latents $P_{\psi_{\mathrm{ref}}}(z,a)$. The poisoned-target plot illustrates the deviation from true next-states, while the clean-model-prediction plot illustrates the natural reference residual scale corresponding to $\delta_{m,\mathrm{ref}}$.

To make the visualization comparable, PCA is fitted on the pooled latent samples used in each panel and the first two principal components are plotted. Figure~\ref{fig:pca_stealth} shows that poisoned next-state targets largely overlap with their clean next-state counterparts and that their centroid remains close to the clean centroid. The clean/reference-model predictions show a similar overlapping pattern relative to true next states. This supports the interpretation that SWAAP's poisoned targets are not obvious latent outliers under this two-dimensional projection. Since PCA only captures the leading low-dimensional variation, this result should not be interpreted as proof of indistinguishability; rather, it provides qualitative evidence consistent with the $\delta_d$ and $\delta_{\mathrm{ref}}$ measurements.

\begin{figure}
    \centering
    \includegraphics[scale=0.33]{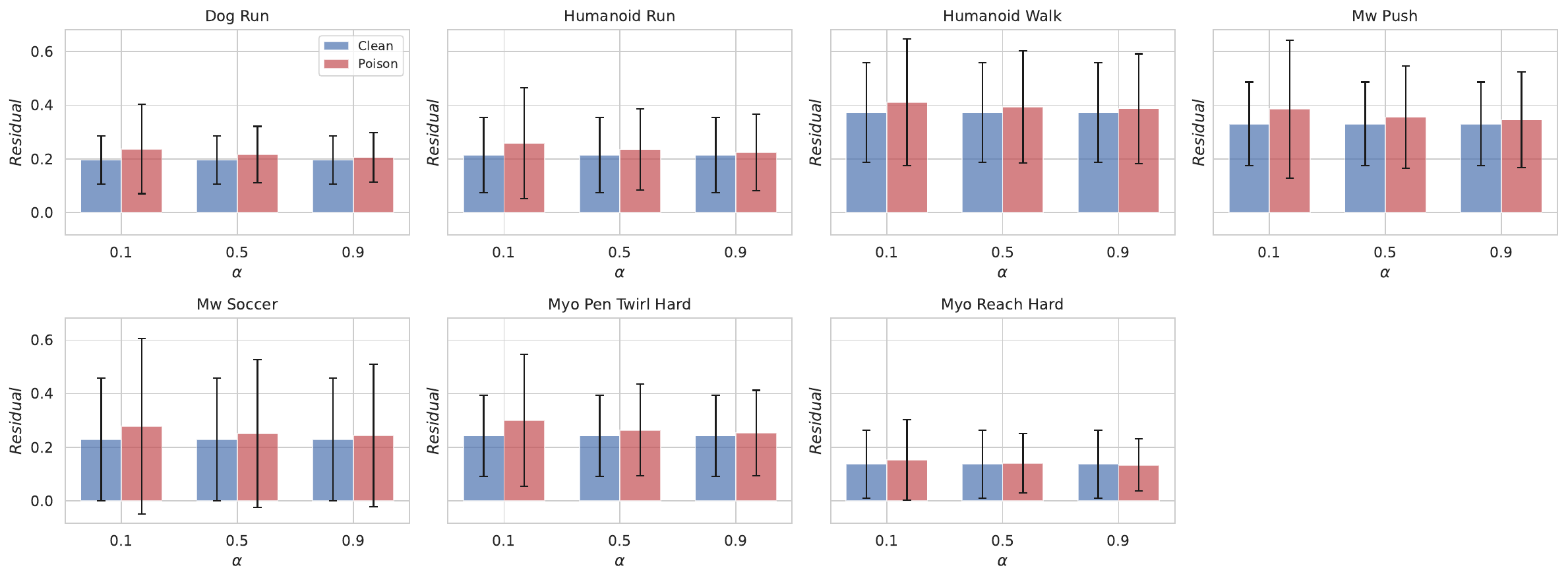}
    \caption{
    Pre-training data-level deviation across stealth weights $\alpha$. Blue bars denote the clean reference residual $\delta_{\mathrm{d,ref}}$ and red bars denote poisoned-data deviation $\delta_d$. Across $\alpha$, poisoned targets remain close to the clean reference residual scale in most environments, even when lower $\alpha$ increases attack strength.
    }
    \label{pretraining_detection}
\end{figure}

\begin{figure}[t]
    \centering
    \begin{subfigure}[t]{0.48\linewidth}
        \centering
        \includegraphics[width=\linewidth]{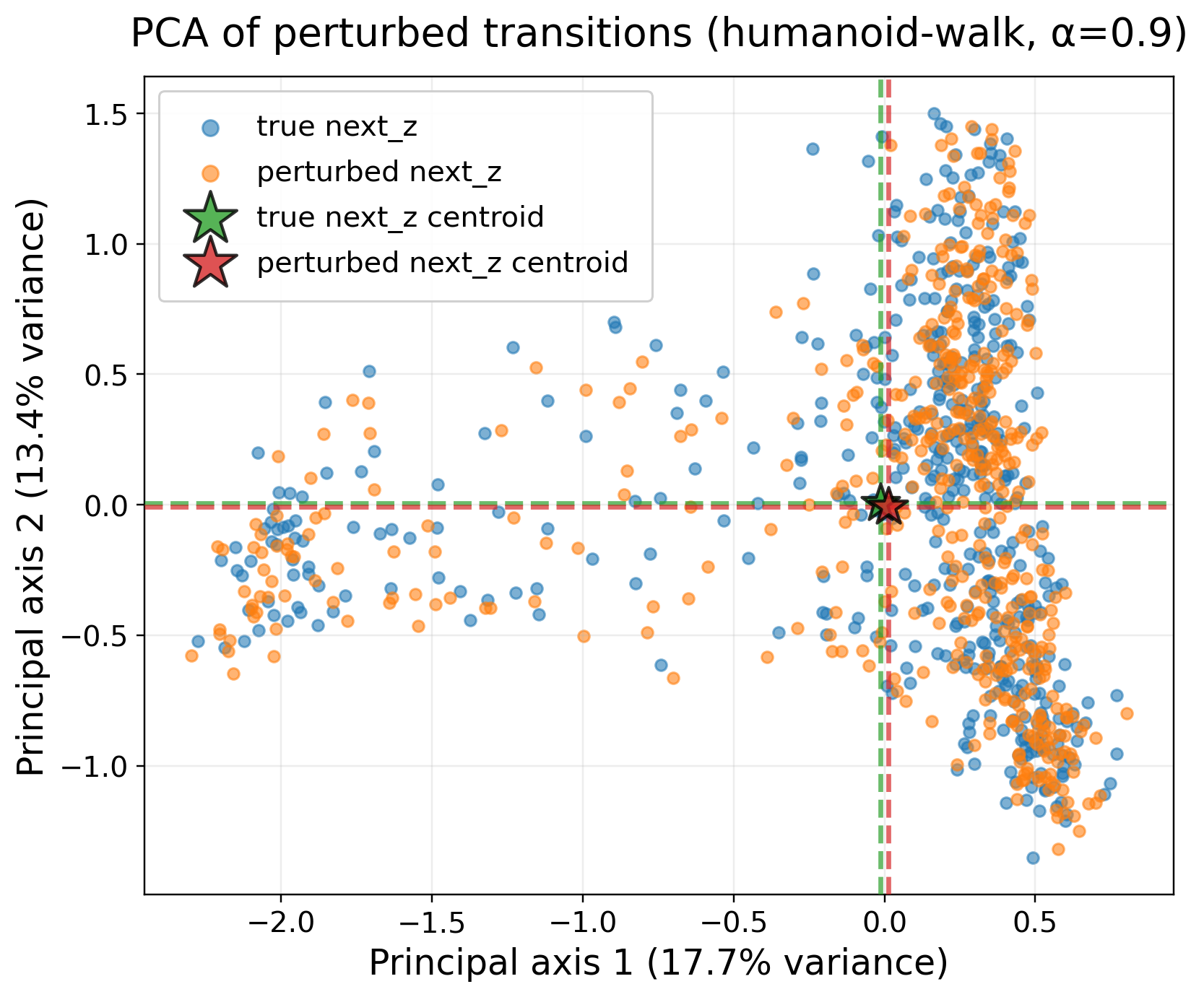}
        \subcaption{Poisoned targets vs. clean next states}
    \end{subfigure}
    \hfill
    \begin{subfigure}[t]{0.48\linewidth}
        \centering
        \includegraphics[width=\linewidth]{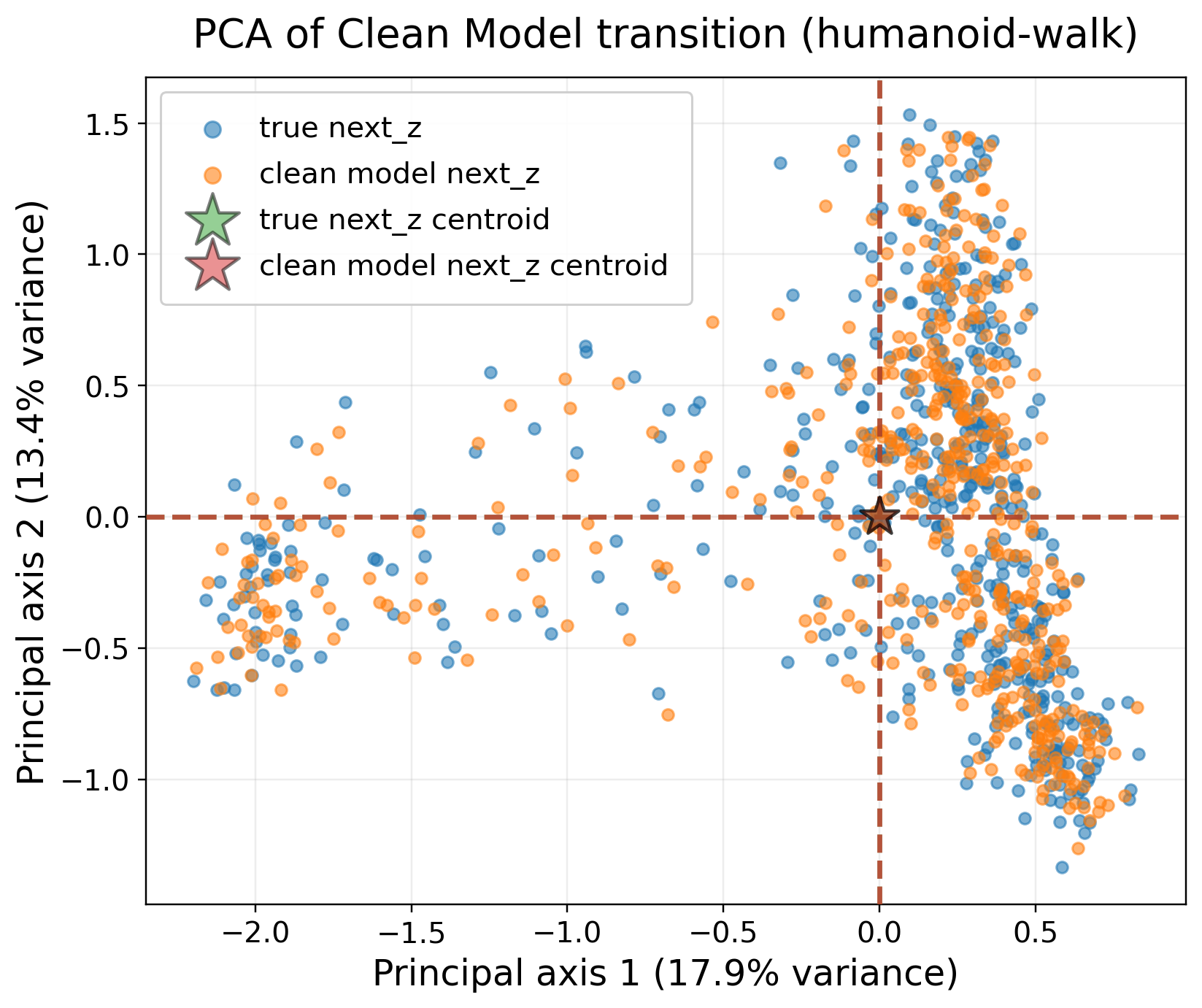}
        \subcaption{Reference-model predictions vs. clean next states}
    \end{subfigure}
    \caption{
    Latent-space PCA visualization on Humanoid-Walk. 
    (a) compares poisoned next-state targets $\tilde z'$ with their clean sources $z'$, illustrating the data-level perturbation relative true states, note this is different from $\delta_d$. 
    (b) compares clean/reference-model predictions $P_{\psi_{\mathrm{ref}}}(z,a)$ with true next states $z'$, illustrating the natural reference residual scale $\delta_{\mathrm{ref}}$. 
    Poisoned targets largely overlap with clean next states, similar to the clean model's natural prediction residuals.
    }
    \label{fig:pca_stealth}
\end{figure}

\subsection{Stage 1--Stage 2 Realization Bottleneck}
\label{app:stage_bottleneck}

SWAAP separates target identification from target realization. Stage~1 searches for a target world model $P_{\hat{\psi}}$ that induces low return while remaining close to clean dynamics. Direct Model Poisoning evaluates this target model directly and therefore measures the strength of Stage~1 under an unrealistic model-overwrite threat model. Stage~2 then attempts to realize the selected target model using only bounded data poisoning. Since the attacker can modify only an $r_p$ fraction of fine-tuning transitions and must keep poisoned targets stealthy, Stage~2 is expected to realize only part of the effect achieved by direct model overwrite.

We quantify this realization gap on Humanoid-Walk. With $\alpha=0.1$ and $r_p=0.3$, the clean model has deviation $\delta_m(P_{\hat{\psi}},P_{\psi_0})=0.15$ from the Stage~1 target, while the Stage~2 poisoned model reaches $\delta_m(P_{\hat{\psi}},P_{\psi})=0.11$ relative to the target model. This shows that Stage~2 recovers a substantial portion of the Stage~1 target, but does not exactly match it. More broadly, Figure~\ref{fig:scatter_comparison} shows this bottleneck as a gap between the Direct Model Poisoning frontier and the SWAAP frontier: in Humanoid-Walk, SWAAP closely tracks the Direct Model Poisoning frontier, while in Myo-Pen-Twirl-Hard the gap is larger, indicating that some harmful target models are more difficult to realize through gradient-matched poisoned data.

We further test whether directly using the Stage~1 target predictions as poisoned data is sufficient. We define \textbf{DMP-data} as a variant that replaces the selected top-$r_p$ next states with predictions from the Stage~1 target model $P_{\hat{\psi}}$, without optimizing the Stage~2 gradient-matching objective. DMP-data remains stealthy at the data level, with $\delta_d=0.0872\pm0.0425$, comparable to the clean reference residual $0.0897\pm0.0455$. However, it is less effective than SWAAP, especially under robust training. Without TRIM, DMP-data gives $R=727\pm205$ and $\delta_m=0.116\pm0.065$, while SWAAP gives $R=521\pm332$ and $\delta_m=0.145\pm0.079$. Under TRIM, DMP-data gives $R=599\pm297$ and $\delta_m=0.130\pm0.071$, while SWAAP gives $R=183\pm356$ and $\delta_m=0.192\pm0.075$. Thus, directly copying target-model next states is not enough; the gradient-matching realization step better steers fine-tuning toward the target dynamics.

Overall, these results identify target realization under attack--stealth constraints as the main practical bottleneck and an opportunity to improve the current two-stage implementation. Stage~1 can identify harmful low-deviation target models, but Stage~2 must approximate them using a small number of poisoned transitions. This explains why Direct Model Poisoning is often stronger than SWAAP, and also why improving Stage~2 or making Stage~1 aware of Stage~2 realizability is a promising direction for future work.

\section{Ablations and Baselines}\label{app:ablations}

\subsection{Iterative FGSM/PGD Transition-Perturbation Baselines}
\label{app:pgd_baseline}

We compare SWAAP with iterative signed-gradient transition-perturbation baselines adapted from adversarial state-perturbation attacks. These baselines do not identify a target world model or match fine-tuning gradients. Instead, they directly perturb next-state targets to lower the value of a prior policy $\pi_\theta$ trained from the clean world model.

For a clean transition $(s,a,s')$, we initialize $\tilde{s}'_0=s'$ and run $K$ projected signed-gradient steps:
\[
\tilde{s}'_{k+1}
=
\Pi_{[s'-\epsilon,s'+\epsilon]}
\left(
\tilde{s}'_k
-
\eta\,\operatorname{sign}
\left(
\nabla_{\tilde{s}'_k} Q(\tilde{s}'_k,\pi_\theta(\tilde{s}'_k))
\right)
\right),
\qquad k=0,\ldots,K-1,
\]
where $\eta$ is the step size, $\epsilon$ bounds the maximum per-coordinate perturbation, and $\Pi_{[s'-\epsilon,s'+\epsilon]}$ denotes projection to the $\ell_\infty$ box around the original next state. When $K=1$, this reduces to an FGSM-style perturbation; for $K>1$, it is an iterative-FGSM / $\ell_\infty$ PGD-style baseline.

We evaluate two variants. The \textbf{offline transition-perturbation baseline} applies this replacement to fine-tuning transitions before model update, matching our data-poisoning setting. The \textbf{online state-perturbation baseline} applies an analogous perturbation at test time to the state observed by the victim. The online variant is not the same threat model as SWAAP because it requires test-time intervention, but it provides context against a standard adversarial-state attack.

\begin{table}[t]
\centering
\caption{
Offline iterative-FGSM/$\ell_\infty$-PGD transition-perturbation baseline on MW-Push. The baseline directly perturbs next-state targets to reduce the value of a clean prior policy, without Stage~1 target-model identification or Stage~2 gradient matching. SWAAP achieves a lower return while keeping $\delta_d$ comparable or lower. Values are mean $\pm$ std.
}
\label{tab:pgd_offline}
\begin{tabular}{lccc}
\toprule
Method & Return & $\delta_m$ & $\delta_d$ \\
\midrule
Clean & $1784 \pm 13$ & $0.080 \pm 0.002$ & $0.0799$ \\
PGD ($\epsilon=0.015$) & $1732 \pm 56$ & $0.107 \pm 0.012$ & $0.0832$ \\
PGD ($\epsilon=0.05$) & $1666 \pm 76$ & $0.121 \pm 0.008$ & $0.0983$ \\
SWAAP & $1641 \pm 80$ & $0.171 \pm 0.011$ & $0.0805$ \\
\bottomrule
\end{tabular}
\end{table}

Table~\ref{tab:pgd_offline} shows that the offline signed-gradient transition-perturbation baseline is weaker than SWAAP on MW-Push. Increasing $\epsilon$ improves its attack strength but also increases $\delta_d$, making the poisoned data less stealthy before fine-tuning. In contrast, SWAAP achieves a lower return with $\delta_d$ close to the clean reference scale, because it poisons data to steer the fine-tuning gradient toward a harmful target world model rather than applying only local value-reducing perturbations.

For the online state-perturbation comparison, iterative signed-gradient perturbation with $\epsilon=0.01$ gives $R=1785\pm5$ and $\delta_m=0.106\pm0.003$, while $\epsilon=0.02$ gives $R=1719\pm48$ and $\delta_m=0.159\pm0.010$. SWAAP gives $R=1641\pm80$ and $\delta_m=0.171\pm0.011$. This comparison is not a direct threat-model match: online perturbation requires modifying states at test time, whereas SWAAP modifies only fine-tuning data and induces later failures through the learned world model. Nevertheless, it shows that SWAAP can produce comparable or stronger degradation without test-time intervention.

\subsection{Top-$r_p$ Transition Selection vs. Random Selection}
\label{app:top_rp}

Stage~2 first selects the subset of transitions to poison. Under a fixed poisoning budget $r_p$, our default heuristic selects the top-$r_p$ transitions with the largest deviation from the Stage~1 target model. These transitions are the most inconsistent with the target dynamics, so modifying them should provide a stronger fine-tuning signal toward the target than modifying randomly selected transitions.

We test this heuristic on Humanoid-Walk using the same Stage~1 target model and the same poisoning budget. Replacing top-$r_p$ selection with random transition selection weakens the attack: top-$r_p$ selection gives $R=775\pm50$ and $\delta_m=0.110\pm0.007$, while random selection gives $R=822\pm13$ and $\delta_m=0.102\pm0.002$. Thus, selecting transitions most inconsistent with the Stage~1 target improves attack effectiveness at comparable model-level deviation.

\subsection{MPC Hyperparameter Ablation}
Table~\ref{mpc_compare} reports the reward of TD-MPC2 under different rollout horizons ($h=3,6,9$) and number of samples for both the clean and SWAAP settings. We observe that our SWAAP attack consistently reduces the agent’s performance across all configurations, demonstrating its effectiveness under every setting. Additionally, the MPC performance generally increases with the number of samples and decreases as the rollout horizon grows. Consequently, the impact of our attack is also influenced by these hyperparameters: it tends to be more significant when the baseline MPC performance is higher (larger sample sizes) and slightly less effective at longer horizons given the low clean reward.

\begin{table}[!t]
\centering
\caption{Effect of MPC hyperparameters ($\texttt{num\_samples}$, horizon $H$) on reward under clean and SWAAP settings evaluated in Humanoid-Walk. Values are mean $\pm$ std over 10 episodes.}
\label{mpc_compare}
\begin{tabular}{|c|c|ccc|}
\hline
\multirow{2}{*}{Num Samples} & \multirow{2}{*}{} & \multicolumn{3}{c|}{Return}                                                             \\ \cline{3-5} 
                             &                   & \multicolumn{1}{c|}{$h=3$}         & \multicolumn{1}{c|}{$h=6$}         & $h=9$         \\ \hline
\multirow{2}{*}{64}          & Clean             & \multicolumn{1}{c|}{12 $\pm$ 21}   & \multicolumn{1}{c|}{5 $\pm$ 4}     & 9 $\pm$ 21    \\
                             & SWAAP             & \multicolumn{1}{c|}{7 $\pm$ 7}     & \multicolumn{1}{c|}{19 $\pm$ 39}   & 14 $\pm$ 20   \\ \hline
\multirow{2}{*}{128}         & Clean             & \multicolumn{1}{c|}{150 $\pm$ 95}  & \multicolumn{1}{c|}{58 $\pm$ 92}   & 2 $\pm$ 2     \\
                             & SWAAP             & \multicolumn{1}{c|}{101 $\pm$ 85}  & \multicolumn{1}{c|}{38 $\pm$ 51}   & 43 $\pm$ 41   \\ \hline
\multirow{2}{*}{256}         & Clean             & \multicolumn{1}{c|}{788 $\pm$ 157} & \multicolumn{1}{c|}{488 $\pm$ 279} & 68 $\pm$ 115  \\
                             & SWAAP             & \multicolumn{1}{c|}{347 $\pm$ 257} & \multicolumn{1}{c|}{93 $\pm$ 93}   & 21 $\pm$ 36   \\ \hline
\multirow{2}{*}{512}         & Clean             & \multicolumn{1}{c|}{855 $\pm$ 62}  & \multicolumn{1}{c|}{687 $\pm$ 112} & 37 $\pm$ 61   \\
                             & SWAAP             & \multicolumn{1}{c|}{594 $\pm$ 203} & \multicolumn{1}{c|}{49 $\pm$ 61}   & 41 $\pm$ 58   \\ \hline
\multirow{2}{*}{1024}        & Clean             & \multicolumn{1}{c|}{875 $\pm$ 63}  & \multicolumn{1}{c|}{812 $\pm$ 91}  & 385 $\pm$ 287 \\
                             & SWAAP             & \multicolumn{1}{c|}{656 $\pm$ 192} & \multicolumn{1}{c|}{87 $\pm$ 147}  & 75 $\pm$ 92   \\ \hline
\end{tabular}
\end{table}

\subsection{Stage 1 Masked Perturbation Fraction}\label{app:stage1_fraction}
\begin{table}[!t]
    \caption{Attack performance with different poison fraction in Stage 1 in Humanoid-Walk with $\alpha$ = 0.9}
    \label{tab:perturb_rate_stage1}
    \centering
\begin{tabular}{|c|c|c|}
\hline
Stage 1 poison fraction & Return    & $\delta_m$    \\ \hline
0\% (clean)             & 866 ± 57  & 0.09 ± 0.05 \\ \hline
10\%                    & 825 ± 106 & 0.10 ± 0.06 \\ \hline
50\%                    & 669 ± 242 & 0.12 ± 0.07 \\ \hline
100\%                   & 521 ± 332 & 0.14 ± 0.08 \\ \hline
\end{tabular}
\end{table}

Stage 1 of SWAAP can be adjusted to only perturb a certain fraction of the state-action pairs' transitions. We have conducted additional experiments on perturbing the top 10\% and 50\% most influential state-action pairs compared to perturbing all pairs considered before 
in Humanoid-Walk with $r_p = 0.1, \alpha = 0.9$ in Table~\ref{tab:perturb_rate_stage1}. Stage~1 optimizes the parameters of a global neural world model, so changing $\hat{\psi}$ can in principle affect predictions beyond any selected subset of inputs. In this ablation, “perturbing a fraction of state-action pairs” therefore means that we restrict the Stage~1 perturbation objective to a masked subset of rollout transitions, rather than enforcing a hard functional constraint that the model output is unchanged elsewhere. Specifically, after collecting rollout transitions, we compute an importance score for each state using the value-spread criterion from~\citet{liang2022efficient},
\[
w(s)=\max_{a_1\in\mathcal A}Q^\pi(s,a_1)-\min_{a_2\in\mathcal A}Q^\pi(s,a_2).
\]
We then select the top-$w_c$ fraction of transitions by this score and apply the Stage~1 target-model perturbation loss only on this selected subset. The rest of the transitions still contribute to the usual model-level deviation/stealth regularization when applicable, but they do not receive direct target-perturbation pressure. Thus, the ablation measures how the attack changes when Stage~1 is guided by increasingly sparse sets of high-importance transitions, not an exact guarantee that the learned neural model changes only on those inputs.

Table~\ref{tab:perturb_rate_stage1} shows that increasing this selected fraction strengthens the attack but also increases $\delta_m$, giving a tunable effectiveness--stealth trade-off.

\subsection{Stage 2 Poisoning Ratio $r_p$}\label{app:rp_ablation}
\begin{table}[!t]
    \caption{Attack performance with different data poison ratio ($r_p$) in Stage 2 in Humanoid-Walk with $\alpha$ = 0.9}
    \label{tab:r_p}
    \centering
    \begin{tabular}{|c|c|c|}
    \hline
    $r_p$ & Return    & $\delta_m$    \\ \hline
    clean & 866 ± 57  & 0.09 ± 0.05 \\ \hline
    0.01  & 844 ± 94  & 0.10 ± 0.05 \\ \hline
    0.05  & 739 ± 172 & 0.11 ± 0.07 \\ \hline
    0.10  & 521 ± 332 & 0.14 ± 0.08 \\ \hline
    0.20  & 317 ± 339 & 0.18 ± 0.08 \\ \hline
    0.30  & 242 ± 305 & 0.19 ± 0.08 \\ \hline
    \end{tabular}
\end{table}
We conduct an ablation study on different $r_p$ ratios in Stage 2, with results in Table~\ref{tab:r_p}. We observe that with lower poisoning rates (1\% and 5\%), the attack performance decreases as expected, but it still consistently induces noticeable degradation in return.

\subsection{Trajectory-Consistent Data Poisoning}
\label{app:trajectory_dp}

The Stage~2 formulation in Eq.~\ref{dp_objective} treats the fine-tuning data as a transition buffer, where each tuple $(s_t,a_t,s_{t+1})$ can be poisoned by replacing only the next-state target. In sequential trajectory data, however, the same intermediate state appears in two adjacent tuples:
\[
(s_t,a_t,s_{t+1}), \qquad (s_{t+1},a_{t+1},s_{t+2}).
\]
Therefore, replacing $s_{t+1}$ only as the next-state target of the first tuple would make the stored trajectory internally inconsistent. To adapt SWAAP to this setting, we use a trajectory-consistent update: when transition $t$ is selected for poisoning, we replace both occurrences of the intermediate latent state,
\[
(s_t,a_t,s_{t+1}) \mapsto (s_t,a_t,\tilde{s}_{t+1}), \qquad
(s_{t+1},a_{t+1},s_{t+2}) \mapsto (\tilde{s}_{t+1},a_{t+1},s_{t+2}).
\]
Terminal transitions are excluded so that $t+1$ always belongs to the same trajectory. In the gradient-matching step, we add a SARSA-style consistency term so the matching gradient becomes
\[
G_{\text{real}}
=
\mathbb{E}_{(s_t,a_t, s_t',a_{t+1},s_{t+1}')\sim \tilde D}
\Big[
\nabla_{\psi_0} \Big(
\| \tilde{s}'_{t}-P_{\psi_0}(s_t,a_t)\|_2^2
+
\| s'_{t+1}-P_{\psi_0}(\tilde{s}'_{t},a_{t+1})\|_2^2
\Big)\Big],
\]
which encourages the poisoned intermediate state to be both reachable from $(s_t,a_t)$ and predictive of the observed successor under $a_{t+1}$. After each poisoned-target update, we explicitly enforce $s_{t+1}=\tilde{s}_{t+1}$ in the next tuple, so the saved poisoned data remain trajectory-consistent.

On Humanoid-Walk, trajectory-consistent poisoning with $r_p=0.05$ gives return $784\pm19$ and model-level deviation $\delta_m=.111\pm.003$, with data-level reference deviation $\delta_d=.102\pm.007$. Compared with the standard transition-buffer setting in Table~\ref{main_results}, the attack is weaker, as expected, because the consistency constraint reduces the degrees of freedom available to the poisoner. Nevertheless, the result shows that SWAAP can be adapted from independent transition buffers to sequential fine-tuning trajectories.

\subsection{Larger Fine-Tuning Buffers}\label{app:larger_buffer}
We conducted additional experiments on \texttt{HumanoidWalk} using fine-tuning dataset sizes ranging from 5{,}000 to 25{,}000, with a fixed budget of $500$ poisoned transitions. The results are shown in Table~\ref{tab:ft_sizes}. Although increasing the fine-tuning dataset size reduces the magnitude of performance degradation, as larger clean buffers naturally dilute the poisoned portion, SWAAP still consistently induces meaningful return drops across all tested sizes, demonstrating its robustness even when the victim fine-tunes on substantially larger datasets.

\begin{table}[!t]
\centering
\caption{Effect of fine-tuning dataset size in HumanoidWalk.}
\label{tab:ft_sizes}
\begin{tabular}{lcc}
\toprule
\textbf{Fine-tuning Dataset Size} & \textbf{Return} & \textbf{$\delta_m$} \\
\midrule
5{,}000  & $521 \pm 332$ & $0.14 \pm 0.08$ \\
10{,}000 & $540 \pm 348$ & $0.14 \pm 0.08$ \\
15{,}000 & $550 \pm 306$ & $0.14 \pm 0.08$ \\
20{,}000 & $607 \pm 300$ & $0.13 \pm 0.08$ \\
25{,}000 & $679 \pm 234$ & $0.12 \pm 0.07$ \\
\bottomrule
\end{tabular}
\end{table}

\section{Generalization Experiments}\label{app:generalization}

\subsection{Additional MyoSuite and MetaWorld Environments}
We conducted additional experiments on five MyoSuite and MetaWorld tasks, with results reported in Table~\ref{tab:extra_envs}. All experiments use $r_p = 0.1$ and are evaluated over 100 episodes. For \texttt{Obj-Hold-Hard}, \texttt{Key-Turn}, and \texttt{Coffee-Pull}, we set $\alpha = 0.9$, while for \texttt{Pose} and \texttt{Door-Open}, we use $\alpha = 0.1$. These results further confirm that SWAAP reliably reduces the return while keeping the poisoned world model close to the clean model, consistent with our main findings.

\begin{table}[!t]
\centering
\caption{Additional evaluation on MyoSuite and MetaWorld environments. Each test is a single run with $100$ episodes.}
\label{tab:extra_envs}
\resizebox{0.6\textwidth}{!}{
\begin{tabular}{lcccc}
\toprule
\textbf{Environment} & \textbf{Natural Return} & \textbf{SWAAP Return} & \textbf{Natural $\delta_m$} & \textbf{SWAAP $\delta_m$} \\
\midrule
myo-obj-hold-hard & $-289 \pm 2343$  & $-2671 \pm 4241$ & $0.10 \pm 0.07$ & $0.13 \pm 0.05$ \\
myo-pose           & $696 \pm 4$      & $665 \pm 157$    & $0.04 \pm 0.04$ & $0.08 \pm 0.06$ \\
myo-key-turn       & $1125 \pm 196$   & $940 \pm 375$    & $0.19 \pm 0.14$ & $0.11 \pm 0.08$ \\
mw-coffee-pull   & $1511 \pm 25$    & $1402 \pm 318$   & $0.08 \pm 0.04$ & $0.09 \pm 0.06$ \\
mw-door-open     & $1551 \pm 58$    & $1310 \pm 445$   & $0.04 \pm 0.02$    & $0.06 \pm 0.05$ \\
\bottomrule
\end{tabular}
}
\end{table}

\subsection{DINO-WM Push-T Results}\label{app:dino}
We adapted our method to the DINO-WM world models for goal-conditioned tasks~\citep{zhou2025dinowmworldmodelspretrained}. Our experimental setup largely follows the configuration used in the original implementation; the detailed settings are provided in Table~\ref{tab:dino_setup}.

For Stage 1, we used a goal-conditioned RL agent as the surrogate policy. Preliminary results show that the identified target world model reduces the success rate on the Push-T task from $92\%$ to $72\%$, with the relative deviation increasing from $\delta_m = 0.11 \pm 0.06$ to $\delta_m = 0.24 \pm 0.10$.

We then conducted additional experiments incorporating Stage 2 of our attack pipeline. Using the perturbed model identified in Stage 1, we employed gradient matching to generate poisoned samples (poisoning ratio 0.15) and fine-tuned a DINO-WM model on the resulting poisoned dataset. On the Push-T task, performance dropped from $92\%$ to $77\%$, with $\delta_m = 0.16 \pm 0.09$. These early results suggest that the method can extend beyond TD-MPC2 to goal-conditioned visual world models, although a broader evaluation is left for future work.

\begin{table}[!t]
\centering
\small
\caption{Key hyperparameters used in the DINO-WM Push-T experiment}
\label{tab:dino_setup}
\begin{tabularx}{\textwidth}{l l X}
\toprule
\textbf{Hyperparameter} & \textbf{Typical value(s)} & \textbf{Description} \\
\midrule

obs\_shape &
(3, 224, 224) &
Shape of pixel observation $s$. \\

latent\_shape &
(196, 394) &
Shape of latent state $z = \mathrm{enc}(s)$. \\

action\_dim &
10 &
Dimension of actions. \\

num\_hist &
3 &
Number of historical latent states used to predict the next state. \\

H &
5 &
Planning horizon length (number of rollout steps during planning), which also corresponds to the number of planned actions. \\

num\_samples &
100 &
Total number of sampled trajectories during planning. \\

opt\_steps &
30 &
Number of optimization iterations per planning step using cross-entropy method (CEM). \\

\bottomrule
\end{tabularx}
\end{table}

\subsection{Gray-Box Attack on TD-MPC2 Multi-Task World Model}\label{app:graybox}
We conducted a preliminary gray-box experiment in which the attacker only has access to a surrogate world model trained on a single task, while the victim employs a multi-task world model. The attacker performs Stage 1 and Stage 2 entirely on the surrogate model and then supplies the resulting poisoned dataset to the victim for fine-tuning.
In this experiment, the surrogate is a TD-MPC2 world model trained on MW-Soccer, whereas the victim is a multi-task world model used across multiple tasks. We use $r_p = 0.1, \alpha = 0.1$ in this experiment. Our initial results show that SWAAP still induces substantial performance degradation across multiple downstream tasks (see Table~\ref{tab:gray-box}), despite the mismatch between the surrogate and victim models.
\begin{table}[!t]
\centering
\small
\caption{Gray-box attack on TD-MPC2 multi-task world model}
\label{tab:gray-box}
\begin{tabular}{l l l c c}
\toprule
\textbf{Source Task} & \textbf{Target Task} & \textbf{Setting} 
& \textbf{Return (R $\pm$ std)} 
& $\boldsymbol{\delta_m}$ \textbf{($\pm$ std)} \\
\midrule
MW-Soccer & MW-Soccer & Natural & $1380 \pm 382$ & $0.11 \pm 0.11$ \\
MW-Soccer & MW-Soccer & SWAAP   & $1115 \pm 442$ & $0.13 \pm 0.12$ \\
MW-Soccer & MW-Push   & Natural & $1290 \pm 609$ & $0.13 \pm 0.16$ \\
MW-Soccer & MW-Push   & SWAAP   & $1064 \pm 692$ & $0.16 \pm 0.19$ \\
\bottomrule
\end{tabular}
\end{table}

\end{document}

%% file: math_commands.tex
\usepackage{amsmath,amsfonts,bm}

\def\eqref#1{equation~\ref{#1}}
\def\Eqref#1{Equation~\ref{#1}}

\def\1{\bm{1}}

\DeclareMathAlphabet{\mathsfit}{\encodingdefault}{\sfdefault}{m}{sl}
\SetMathAlphabet{\mathsfit}{bold}{\encodingdefault}{\sfdefault}{bx}{n}

\DeclareMathOperator*{\argmax}{arg\,max}
\DeclareMathOperator*{\argmin}{arg\,min}